\newcommand{\be}{\begin{eqnarray}}
\newcommand{\ee}{\end{eqnarray}}
\newcommand{\E}{\mathbb{E}}
\newcommand{\n}{\nonumber \\}
\newcommand{\btheta}{\mathbf{\theta}}
\newcommand{\bx}{\mathbf{x}}
\newcommand{\g}{\, | \,}
\newcommand{\parhead}[1]{\textit{#1 \,}}
\newcommand{\lambdamax}{\lambda_{\mathrm{max}}}
\newcommand{\lambdamin}{\lambda_{\mathrm{min}}}
\newcommand{\errmax}{\mathrm{err_\mathrm{max}}\newcommand{\Bt}{B_{\epsilon/S} }}
\renewcommand{\v}{\textrm{v} }
\newcommand{\half}{\textstyle{\frac{1}{2}}}
\newcommand{\invS}{{\textstyle \frac{1}{\sqrt{S}}}}
\newtheorem{assumption}{Assumption}
\begin{document}

\title{\textbf{Stochastic Gradient Descent as Approximate Bayesian Inference}}

\author{\name Stephan Mandt \email stephan.mandt@gmail.com \\
\addr Data Science Institute\\Department of Computer Science\\ Columbia University\\New York, NY 10025, USA
\AND
\name Matthew D. Hoffman \email mathoffm@adobe.com \\
\addr Adobe Research\\ Adobe Systems Incorporated\\
601 Townsend Street\\
San Francisco, CA 94103, USA
\AND
\name David M. Blei \email david.blei@columbia.edu \\
\addr Department of Statistics\\ Department of Computer Science\\Columbia University\\New York, NY 10025, USA
}
\editor{Manfred Opper}

\maketitle

\begin{abstract}%
  Stochastic Gradient Descent with a constant learning rate (constant SGD) simulates a Markov chain with a stationary distribution. 
  With this perspective, we derive several new results. (1) We show that constant SGD can be used
  as an approximate Bayesian posterior inference algorithm. Specifically, we show how to adjust the tuning
  parameters of constant SGD to best match the stationary distribution to a posterior, minimizing the Kullback-Leibler divergence between these two distributions.
  (2) We demonstrate that constant SGD gives rise to a new variational
  EM algorithm that optimizes hyperparameters in complex probabilistic models.
  (3) We also show how to tune SGD with momentum for approximate sampling.
  (4) We analyze stochastic-gradient MCMC algorithms. For Stochastic-Gradient Langevin Dynamics and Stochastic-Gradient Fisher Scoring, we quantify the approximation errors due to finite learning rates.
  Finally (5), we use the stochastic process perspective to give a short proof of why Polyak averaging is optimal. Based on this idea, we propose a scalable approximate MCMC algorithm, the Averaged Stochastic Gradient Sampler.
\end{abstract}
\begin{keywords}⟨approximate Bayesian inference, variational inference, stochastic optimization, stochastic gradient MCMC, stochastic differential equations⟩\end{keywords}
\section{Introduction}

Stochastic gradient descent (SGD) has become crucial to modern machine
learning. SGD optimizes a function by following noisy gradients with a
decreasing step size. The classical result
of~\citet{robbins1951stochastic} is that this procedure provably
reaches the optimum of the function (or local optimum, when it is
nonconvex)~\citep{bouleau1994numerical}. Recent studies investigate the merits of adaptive step
sizes~\citep{duchi2011adaptive, tieleman2012lecture}, gradient or 
iterate averaging~\citep{toulistowards,defossez2015averaged}, and constant
step-sizes~\citep{bach2013non, flammarion2015averaging}. Stochastic 
gradient descent has enabled efficient optimization with massive data,
since one can often obtain noisy-but-unbiased gradients very cheaply by
randomly subsampling a large dataset.

Recently, stochastic gradients (SG) have also been used in the service of
scalable Bayesian Markov Chain Monte Carlo (MCMC) methods, where the goal is
to generate samples from a
conditional distribution of latent variables given a data set.  In
Bayesian inference, we
assume a probabilistic model $p(\btheta, \bx)$ with data $\bx$ and
hidden variables $\btheta$; our goal is to approximate the posterior
\begin{align}
  \label{eq:intro-posterior}
  p(\btheta \g \bx) = \exp\{\log p(\btheta, \bx) - \log p(\bx)\}.
\end{align}
These so-called stochastic gradient MCMC algorithms---such as SG Langevin
dynamics~\citep{welling2011bayesian}, SG Hamiltonian
Monte Carlo~\citep{chen2014stochastic}, SG thermostats~\citep{ding2014bayesian},
and SG Fisher
scoring~\citep{ahn2012bayesian}---employ stochastic gradients of
$\log p(\btheta, \bx)$ to improve convergence and computation of
existing sampling algorithms.  Also see~\citet{ma2015complete} for a
complete classification of these algorithms.

The similarities between SGD as an optimization algorithm and
stochastic gradient MCMC algorithms raise the
question of how exactly these two types of algorithms relate to each
other. The main questions we try to address in this paper are:
\begin{itemize}
  \item What is the simplest modification to SGD that
    yields an efficient approximate Bayesian sampling algorithm?
  \item How can we construct other sampling algorithms based
on variations of SGD such as preconditioning~\citep{duchi2011adaptive,tieleman2012lecture}, momentum~\citep{polyak1964some} or Polyak averaging~\citep{polyak1992acceleration}?
\end{itemize}
To answer these questions, we draw on
the theoretical analysis tools of continuous-time stochastic
differential equations~\citep{bachelier1900theorie, gardiner1985handbook} and variational inference~\citep{Jordan:1999}.

As a simple example, consider SGD with a
constant learning rate (constant SGD). Constant SGD first marches toward
an optimum of the objective function and then bounces around its vicinity. (In contrast,
traditional SGD converges to the optimum by decreasing the
learning rate.)  Our analysis below rests on the idea that constant SGD is a stochastic process with a stationary distribution,
one that is centered on the optimum and that has a certain
covariance structure. The main idea is that we can use this
stationary distribution to approximate a posterior. In contrast, 
stochastic gradient MCMC algorithms take precautions to sample
from an asymptotically exact posterior, but at the expense of slower
mixing. Our inexact approach enjoys minimal implementation effort and typically
faster mixing. It is a hybrid of Monte Carlo and variational
algorithms, and complements the toolbox of approximate Bayesian inference.

Here is how it works. We apply constant SGD as though we were trying
to minimize the negative log-joint probability $-\log p(\btheta, \bx)$
over the model parameters $\btheta$. Constant SGD has several tunable
parameters: the constant learning rate, the minibatch size, and the
preconditioning matrix (if any) that we apply to the gradient
updates. These tuning parameters all affect the stationary
distribution of constant SGD; depending on how they are set, this
stationary distribution of $\btheta$ will be closer to or farther from
the posterior distribution $p(\btheta | \bx)$. If we set these
parameters appropriately, we can perform approximate Bayesian inference
by simply running constant SGD.

In more detail, we make the following contributions:
\begin{itemize} 

\item[1.] First, we develop a variational Bayesian view of stochastic
  gradient descent. Based on its interpretation as a
  continuous-time stochastic process---specifically a multivariate
  Ornstein-Uhlenbeck (OU) process~\citep{uhlenbeck1930theory,gardiner1985handbook}---we compute stationary
  distributions for a large class of SGD algorithms, all of which
  converge to a Gaussian distribution with a non-trivial covariance
  matrix. The stationary distribution is parameterized by the learning
  rate, minibatch size, and preconditioning matrix.

  Results about the multivariate OU process make it easy to minimize the
  KL divergence between the stationary distribution and the posterior analytically.
  We can thus relate the optimal step size or preconditioning matrix to the Hessian and noise covariances near the optimum.
  The optimal preconditioners relate to
  AdaGrad~\citep{duchi2011adaptive},
  RMSProp~\citep{tieleman2012lecture}, and classical Fisher
  scoring~\citep{longford1987fast}.  
  We demonstrate how these different
  optimization methods compare when used for approximate inference.

\item[2.] We show that constant SGD gives rise to a new variational EM algorithm~\citep{Bishop:2006} which
  allows us to use SGD to optimize hyperparameters while
  performing approximate inference in a Bayesian model.
  We demonstrate this by fitting a posterior to a Bayesian multinomial regression model.
  
\item[3.] We use our formalism to derive the stationary distribution for SGD with
  momentum~\citep{polyak1964some}. Our results show that adding momentum only changes the
  scale of the covariance of the stationary distribution, not its shape.
  This scaling factor is a simple function of the
  damping coefficient. Thus we can also use SGD with momentum for approximate Bayesian inference.

\item[4.] Then, we analyze scalable MCMC
  algorithms. Specifically, we use the stochastic-process perspective
  to compute the stationary distribution of Stochastic-Gradient Langevin Dynamics (SGLD) by~\citet{welling2011bayesian} when using
  constant learning rates, and analyze stochastic
  gradient Fisher scoring (SGFS) by~\citet{ahn2012bayesian}.   
  The view from the multivariate OU process reveals a simple
  justification for this method: we confirm that the preconditioning
  matrix suggested in SGFS is indeed optimal. We also derive a
  criterion for the free noise parameter in SGFS that can enhance numerical stability, and we show how the
  stationary distribution is modified when the preconditioner is
  approximated with a diagonal matrix (as is often done in practice
  for high-dimensional problems).

\item[5.] Finally, we analyze iterate averaging~\citep{polyak1992acceleration}, where one successively averages
	the iterates of SGD to obtain a lower-variance estimator of the optimum. Based on the stochastic-process methodology,
	we give a shorter derivation of a known result, namely
        that the convergence speed of iterate averaging cannot be
        improved by preconditioning the stochastic gradient with any
        matrix. Furthermore, we show that (under certain assumptions),
        Polyak iterate averaging can yield an \emph{optimal}
        stochastic-gradient MCMC algorithm, and that this optimal
        sampler can generate exactly one effectively independent
        sample per pass through the dataset. This result is both
        positive and negative; it suggests that iterate averaging can
        be used as a powerful Bayesian sampler, but it also argues
        that no SG-MCMC algorithm can generate more than one useful
        sample per pass through the data, and so the cost of these
        algorithms must scale linearly with dataset size.

\end{itemize}

Our paper is organized as follows. In Section~\ref{sec:formalism} we
review the continuous-time limit of SGD, showing that it can be
interpreted as an OU process. In Section~\ref{sec:consequences} we
present consequences of this perspective: the interpretation of SGD as
variational Bayes and results around preconditioning and momentum.
Section~\ref{sec:MCMC} discusses SG Langevin Dynamics and SG Fisher Scoring.
In Section~\ref{sec:iterate} we discuss Polyak averaging for optimization and sampling.
In the empirical study (Section \ref{sec:experiments}), we show that our theoretical
assumptions are satisfied for different models, that we can use
SGD to perform gradient-based hyperparameter optimization, and that iterate averaging gives rise to a Bayesian sampler with fast mixing.

\section{Related Work}
Our paper relates to Bayesian inference and stochastic optimization.

\parhead{Scalable MCMC.} Recent work in Bayesian statistics focuses on
making MCMC sampling algorithms scalable by using stochastic
gradients. In particular, \citet{welling2011bayesian} developed
stochastic-gradient Langevin dynamics (SGLD). This algorithm samples
from a Bayesian posterior by adding artificial noise to the stochastic
gradient which, as the step size decays, comes to dominate the SGD noise. Also see
\citet{sato2014approximation} for a detailed convergence analysis of
the algorithm. Though elegant, one disadvantage of SGLD is that the
step size must be decreased to arrive at the correct sampling
regime, and as step sizes get small so does mixing speed. Other
research suggests improvements to this issue, using Hamiltonian
Monte Carlo~\citep{chen2014stochastic} or
thermostats~\citep{ding2014bayesian}. 
 \citet{shang2015covariance} build on thermostats and use a similar continuous-time
formalism as used in this paper.

\citet{ma2015complete} give a
complete classification of possible stochastic gradient-based MCMC
schemes.

Below, we will analyze properties of stochastic gradient Fisher
scoring~\citep[SGFS;][]{ahn2012bayesian}, an extention to SGLD. This
algorithm speeds up mixing times in SGLD by preconditioning gradients
with the inverse gradient noise covariance. \citet{ahn2012bayesian}
show that (under some assumptions) SGFS can eliminate the bias
associated with running SGLD with non-vanishing learning rates.  Our
approach to analysis can extend and sharpen the results of
\citet{ahn2012bayesian}. For example, they propose using a diagonal
approximation to the gradient noise covariance as a heuristic; in our
framework, we can analyze and rigorously justify this choice as a
variational Bayesian approximation.

\citet{maclaurin2015early} also interpret SGD as a non-parametric
variational inference scheme, but with different goals and in a
different formalism. The paper proposes a way to track entropy changes
in the implicit variational objective, based on estimates of the
Hessian. As such, the authors mainly consider sampling distributions that are
not stationary, whereas we focus on constant learning rates and
distributions that have (approximately) converged. Note that their
notion of hyperparameters does not refer to model parameters but to parameters of SGD.

\parhead{Stochastic Optimization.} Stochastic gradient descent is an
active field~\citep{zhang2004solving,bottou1998online}. Many papers
discuss constant step-size SGD. \citet{bach2013non,
  flammarion2015averaging} discuss convergence rate of averaged
gradients with constant step size, while~\citet{defossez2015averaged}
analyze sampling distributions using quasi-martingale techniques.
\citet{toulis2014statistical} calculate the asymptotic variance of SGD
for the case of decreasing learning rates, assuming that the data is
distributed according to the model. None of these papers consider the
Bayesian setting. \citet{dieuleveut2017bridging} also analyzed SGD with
constant step size and its relation to Markov chains. Their analysis resulted in a novel extrapolation
scheme to improve the convergence behavior of iterate averaging.

The fact that optimal preconditioning (using a decreasing
Robbins-Monro schedule) is achieved by choosing the inverse noise
covariance was first shown in~\citep{sakrison1965efficient}, but here
we derive the same result based on different arguments and
suggest a scalar prefactor. Note the optimal scalar learning rate of
$2/{\rm Tr}(BB^\top)$, where $BB^\top$ is the SGD noise covariance (as discussed in Section~\ref{sec:consequences} or this paper), can also be derived based on stability
arguments. This was done in the context of least mean square
filters~\citep{widrow1985adaptive}.

Finally,~\citet{chen2015bridging} also draw analogies between SGD and
scalable MCMC. They suggest annealing the posterior over time to
use scalable MCMC as a tool for global optimization. We follow the
opposite idea and suggest to use constant SGD as an approximate
sampler by choosing appropriate learning rate and preconditioners.

\parhead{Stochastic differential equations.} The idea of analyzing
stochastic gradient descent with stochastic differential equations is
well established in the stochastic approximation
literature~\citep{kushner2003stochastic, ljung2012stochastic}. Recent
work focuses on dynamical aspects of the
algorithm. \citet{li2015dynamics} discuss several one-dimensional
cases and momentum. \citet{li2017stochastic} give a mathematically
rigorous justification of the continuous-time limit.~\citet{chen2015convergence} analyze
stochastic gradient MCMC and study their convergence properties
using stochastic differential equations.

Our work makes use of the same formalism but has a different focus.
Instead of analyzing dynamical properties, we focus on stationary
distributions. Further, our paper introduces the idea of minimizing KL
divergence between multivariate sampling distributions and the
posterior.

\parhead{Variational Inference.} 
Variational Inference (VI) denotes a set of methods which aim at approximating
a Bayesian posterior by a simpler, typically factorized distribution. This is done
by minimizing Kullback-Leibler divergence or related divergences 
between these distributions~\citep{Jordan:1999,opper2001advanced}.
For the class of models where the conditional distributions are
in the exponential family, the variational objective can be optimized by
closed-form updates~\citep{ghahramani2001propagation}, but this is a restricted class
of models with conjugate priors. 
A scalable version of VI, termed Stochastic Variational Inference (SVI), relies on stochastic gradient descent
for data subsampling~\citep{hoffman2013stochastic}.
For non-conjugate models, Black-Box variational inference~\citep{ranganath2014black}
has enabled SVI for a large class of models, but this approach may suffer from 
high-variance gradients.
A modified form of black-box variational inference
relies on re-parameterization gradients~\citep{salimans2013fixed,kingma2013auto,
  rezende2014stochastic,kucukelbir2015automatic,ruiz2016generalized}. This version
is limited to continuous latent variables but typically has much lower-variance gradients.

In this paper, we compare against the Gaussian reparameterization gradient version
of black-box variational inference as used in~\citet{kingma2013auto,
  rezende2014stochastic,kucukelbir2015automatic} which we refer to as BBVI.
We find that our approach performs similarly in practice, but it is
different in that it does not optimize the parameters of a simple variational distribution.
Rather, it controls the shape of the approximate posterior via the
parameters of the optimization algorithm, such as the learning rate or preconditioning matrix.

\section{Continuous-Time Limit Revisited}
\label{sec:formalism}

We first review the theoretical framework that we use throughout the
paper.  Our goal is to characterize the behavior of SGD when using a
constant step size.  To do this, we approximate SGD with a
continuous-time stochastic
process~\citep{kushner2003stochastic,ljung2012stochastic}.

\subsection{Problem Setup}

Consider loss functions of the following form:
\begin{align}
\label{eq:loss-one}
{\cal L}(\theta)
= {\textstyle\frac{1}{N} \sum_{n=1}^N} \ell_n(\theta), \quad
g(\theta)\equiv \nabla_\theta \cal{L}(\theta).
\end{align}
Such loss functions are common in machine learning, where
${\cal L}(\theta) \equiv {\cal L}(\theta,\bx)$ is a loss function that
depends on data $\bx$ and parameters $\theta$. Each
$ \ell_n(\theta)\equiv \ell(\theta,\bx_n)$ is the contribution to the
overall loss from a single observation $\bx_n$.  For example, when
finding a maximum-a-posteriori estimate of a model, the contributions to the
loss may be
\begin{align}
\label{eq:loss-two}
\ell_n(\theta) = - \log p(x_n \,|\,
\theta) - \textstyle{\frac{1}{N}} \log p(\theta),
\end{align}
where $p(x_n \,|\, \theta)$ is the likelihood and $p(\theta)$ is the prior.
For simpler notation, we will suppress the dependence of the loss on
the data.

From this loss we construct stochastic gradients.  Let ${\cal S}$ be a
set of $S$ random indices drawn uniformly at random from the set
$\{1,\ldots, N\}$.  This set indexes functions $\ell_n(\theta)$, and
we call $\cal{S}$ a ``minibatch'' of size $S$.  Based on the
minibatch, we used the indexed functions to form a stochastic estimate
of the loss and a stochastic gradient, 
\begin{align}
\label{eq:SG}
\hat{\cal L}_S(\theta) = \textstyle\frac{1}{S} \sum_{n \in {\cal S}}\, \ell_n(\theta), \quad \hat{g}_S(\theta) = \nabla_\theta\hat{\cal L}_S(\theta).
\end{align}
In expectation the stochastic gradient is the full gradient, i.e., $g(\theta) = {\mathbb E}[\hat{g}_S(\theta)]$.
We use this stochastic gradient in the SGD update
\begin{align}
\label{eq:SGD}
\theta(t+1)  =  \theta(t) - \epsilon \, \hat{g}_S(\theta(t)).
\end{align}
Above and for what follows, we assume a constant (non-decreasing) learning rate $\epsilon$.

Eqs.~\ref{eq:SG} and~\ref{eq:SGD} define the discrete-time
process that SGD simulates from. We will approximate it with a
continuous-time process that is easier to analyze.
  
\subsection{SGD as an Ornstein-Uhlenbeck Process}

We now show how to approximate the discrete-time Eq.~\ref{eq:SGD} with
a continuous-time Ornstein-Uhlenbeck
process~\citep{uhlenbeck1930theory}.  This leads to the stochastic
  differential equation below in Eq.~\ref{eq:OU-process}.  To justify
  the approximation, we make four assumptions.  We verify its accuracy
  empirically in Section~\ref{sec:experiments}.

\begin{assumption}
 Observe that the stochastic gradient is a sum
of $S$ independent, uniformly sampled contributions.  Invoking the
central limit theorem, we assume that the gradient noise is Gaussian
with covariance $\frac{1}{S} C(\theta)$, hence
\begin{align}
\label{eq:SGD2}
\hat{g}_S(\theta)  \approx  g(\theta) + {\textstyle \frac{1}{\sqrt{S}}}\Delta g(\theta),\quad  \Delta g(\theta) \sim {\cal N}(0,C(\theta)). 
\end{align}
\end{assumption}


\begin{assumption}
We assume that the covariance matrix $C(\theta)$ is
approximately constant with respect to $\theta$. As a symmetric positive-semidefinite matrix,
this constant matrix $C$ factorizes as
\begin{align}
\label{eq:CBB}
C(\theta) \approx C  = B B^\top.
\end{align}
\end{assumption}
Assumption 2 is justified when the iterates of SGD are confined to
a small enough region around a local optimum of the loss
(e.g. due to a small $\epsilon$) that the noise
covariance does not vary significantly in that region.

We now define $\Delta \theta(t) = \theta (t+1) - \theta(t)$ and combine
Eqs.~\ref{eq:SGD},~\ref{eq:SGD2}, and~\ref{eq:CBB} to rewrite the
process as
\begin{align}
\label{eq:differenceeq}
\Delta \theta(t)  =  -\epsilon \, g(\theta(t))
+\textstyle \frac{\epsilon}{\sqrt{S}}B \,\Delta W, \quad \Delta W \sim
{\cal N}\left(0,{\bf I}\right).
\end{align}
This can be interpreted as a finite-difference equation that approximates
the following continuous-time stochastic differential equation:
\begin{align}
\label{eq:Langevin}
d \theta(t)  =  - \epsilon g(\theta) dt +\textstyle \frac{\epsilon}{\sqrt{S}}B  \,  dW(t).
\end{align}

\begin{assumption}
We assume that we can approximate the finite-difference 
equation (\ref{eq:differenceeq}) by the stochastic differential
equation (\ref{eq:Langevin}). 
\end{assumption}
This assumption is justified if
either the gradients or the learning rates are small enough
that the discretization error becomes negligible. 

\begin{assumption}
We assume that the stationary
distribution of the iterates is constrained to a region where the loss
is well approximated by a quadratic function,
\begin{align}
{\cal L}(\theta)  =  \textstyle\frac{1}{2} \, \theta^\top A \theta.
\end{align}
(Without loss of generality, we assume that a minimum of the loss is at $\theta=0$.)
We also assume that $A$ is positive definite.
\end{assumption}
The symmetric matrix $A$ is thus the Hessian at the optimum.
Assumption 4 makes sense when the loss function is smooth and
the stochastic process
reaches a low-variance quasi-stationary distribution around a deep local minimum.
The exit time of a stochastic process is typically
exponential in the height of the barriers between minima, which can make local optima very stable even in the presence of noise~\citep{kramers1940brownian}.

\parhead{SGD as an Ornstein-Uhlenbeck process.} The
four assumptions above result in a specific kind of stochastic process, the
multivariate \emph{Ornstein-Uhlenbeck}
process~\citep{uhlenbeck1930theory}:
\begin{align}
\label{eq:OU-process}
d \theta(t)  =  -\epsilon A\, \theta(t) dt + \invS \epsilon B\, dW(t) 
\end{align}

This connection helps us analyze properties of SGD because the
Ornstein-Uhlenbeck process has an analytic stationary distribution
$q(\theta)$ that is Gaussian. This distribution will be the core analytic tool
of this paper: 
\begin{align}
\label{eq:stationarydistribution}
q(\theta)  \propto 
\exp\left\{-\textstyle\frac{1}{2}\theta^\top \Sigma^{-1}\theta\right\}. 
\end{align}
The covariance $\Sigma$ satisfies
\begin{align}
\label{eq:stationaryvariance}
\Sigma A + A \Sigma = \textstyle\frac{\epsilon}{S}BB^\top.
\end{align}
(More details are in Appendix~\ref{sec:covariance}.) 
Without explicitly solving this equation, we see that the resulting
covariance $\Sigma$ is proportional to the learning rate $\epsilon$ and
inversely proportional to
the magnitude of $A$
and minibatch size
$S$.   This characterizes the
stationary distribution of running SGD with a constant step size.

\parhead{Discussion of Assumptions 1--4.}
Our analysis suggests that constant SGD and Langevin-type diffusion algorithms~\citep{welling2011bayesian}
are very similar.  Both types of algorithms can be characterized by three regimes.
First, there is a search phase where the algorithm approaches the optimum. In this early phase, assumptions 1--4
are often violated and it is hard to say anything general about the behavior of the
algorithm. Second, there is a phase where SGD has converged to the vicinity of a local optimum.
Here, the objective already looks quadratic, but the gradient noise is small relative to the average gradient $g(\theta)$.
Thus SGD takes a relatively directed path towards the optimum. This is the regime where our assumptions
should be approximately valid, and where our formalism reveals its use.
Finally, in the third phase the iterates are near the local optimum. 
Here, the average gradient $g(\theta)$ is small and the sampling noise becomes more important. In this final phase,
constant SGD begins to sample from its stationary distribution.

Finally, we note that if the gradient noise covariance $C$ does
not have full rank, then neither will the stationary covariance
$\Sigma$. This scenario complicates the analysis in
Section~\ref{sec:consequences}, so below we will assume that $C$
has full rank. This could be enforced easily by adding very
low-magnitude isotropic artificial Gaussian noise to the stochastic
gradients.

\section{SGD as Approximate Inference}
\label{sec:consequences}

We discussed a continuous-time interpretation of SGD with a constant
step size (constant SGD).  We now discuss how to use constant SGD as
an approximate inference algorithm.  To repeat the set-up from the
introduction, consider a probabilistic model $p(\btheta, \bx)$ with
data $\bx$ and hidden variables $\btheta$; our goal is to approximate
the posterior $p(\btheta \,|\, \bx)$ in Eq.~\ref{eq:intro-posterior}.

We set the loss to be the negative log-joint distribution
(Eqs.~\ref{eq:loss-one} and \ref{eq:loss-two}), which equals the
negative log-posterior up to an additive constant.  The classical goal of SGD is to
minimize this loss, leading us to a maximum-a-posteriori point estimate of the
parameters. This is how SGD is used in many statistical models,
including logistic regression, linear regression, matrix
factorization, and  neural networks.  In
contrast, our goal here is to tune the parameters of SGD so that
its stationary distribution approximates the posterior.

\begin{figure}
  \centering
  \includegraphics[width=.32\linewidth]{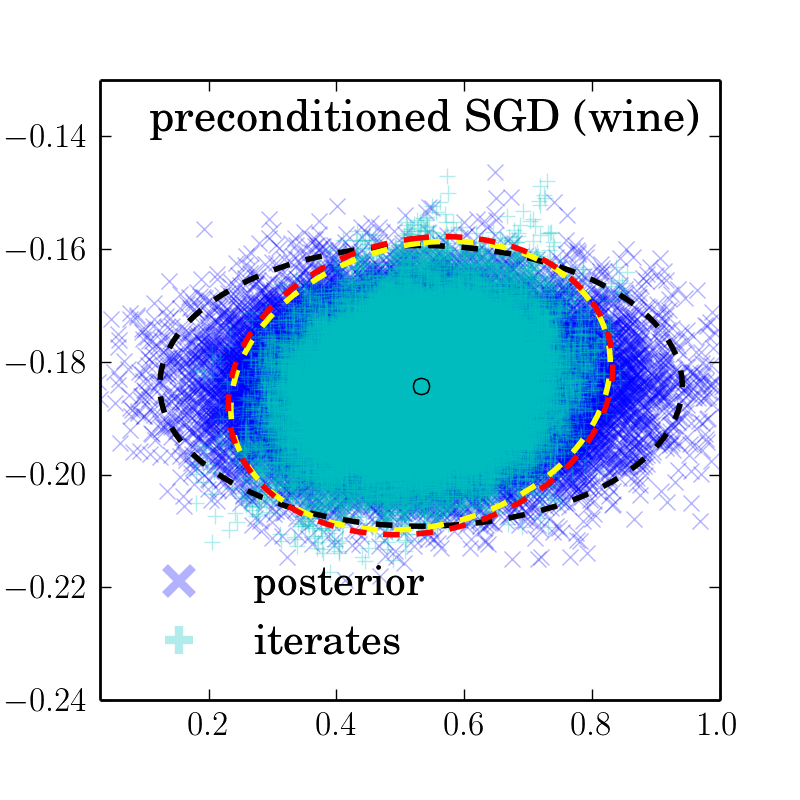}
  \includegraphics[width=.32\linewidth]{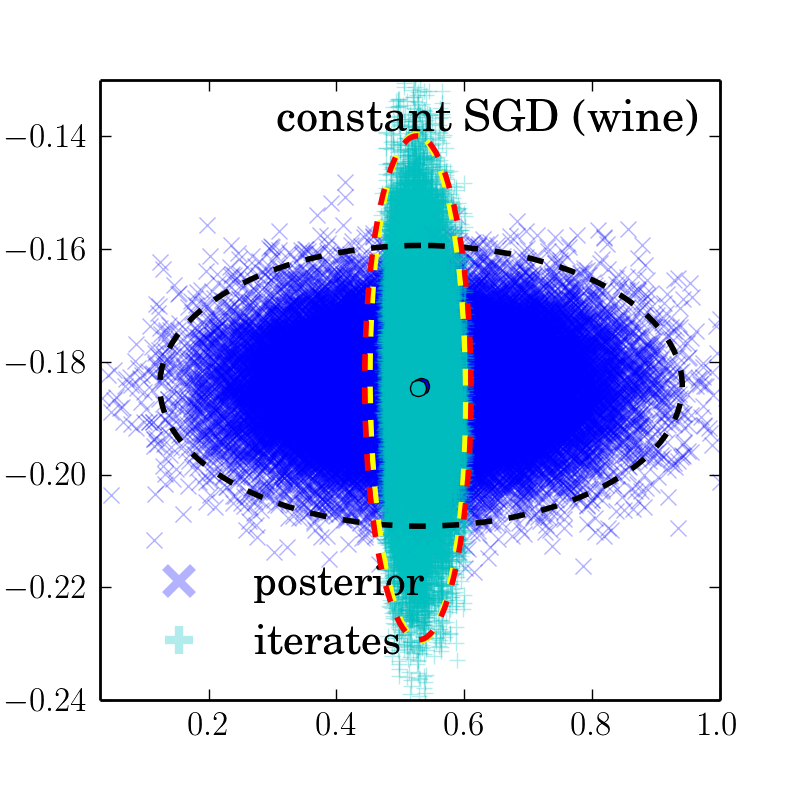}
  \includegraphics[width=.32\linewidth]{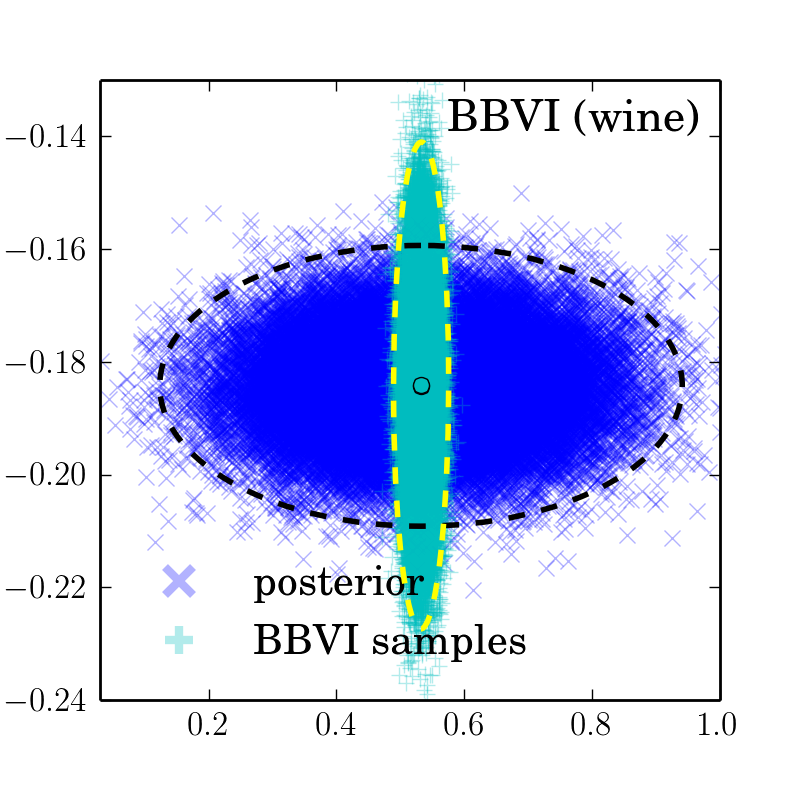}
  \includegraphics[width=.32\linewidth]{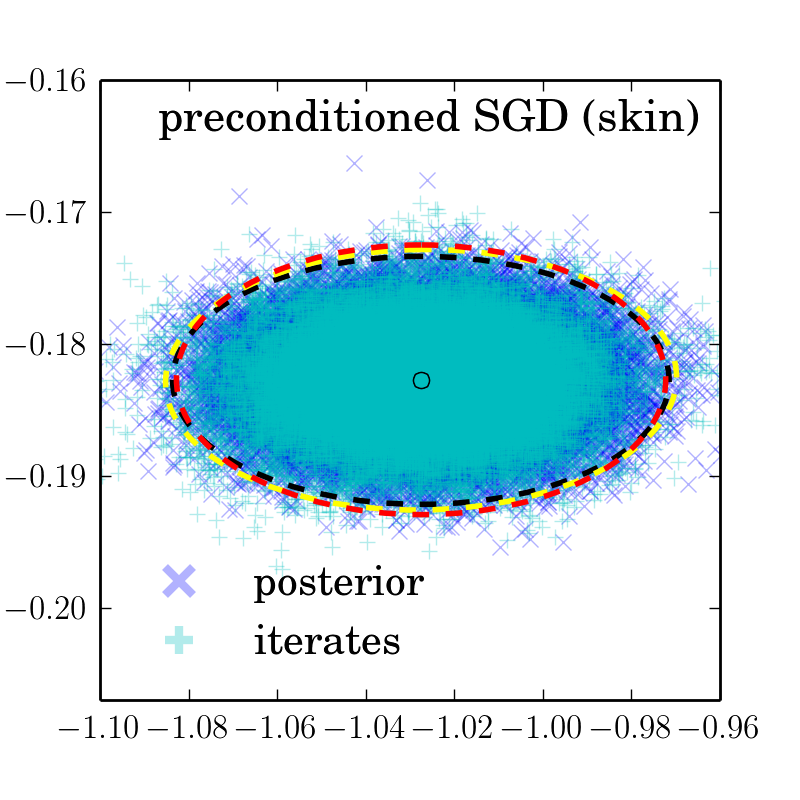}
  \includegraphics[width=.32\linewidth]{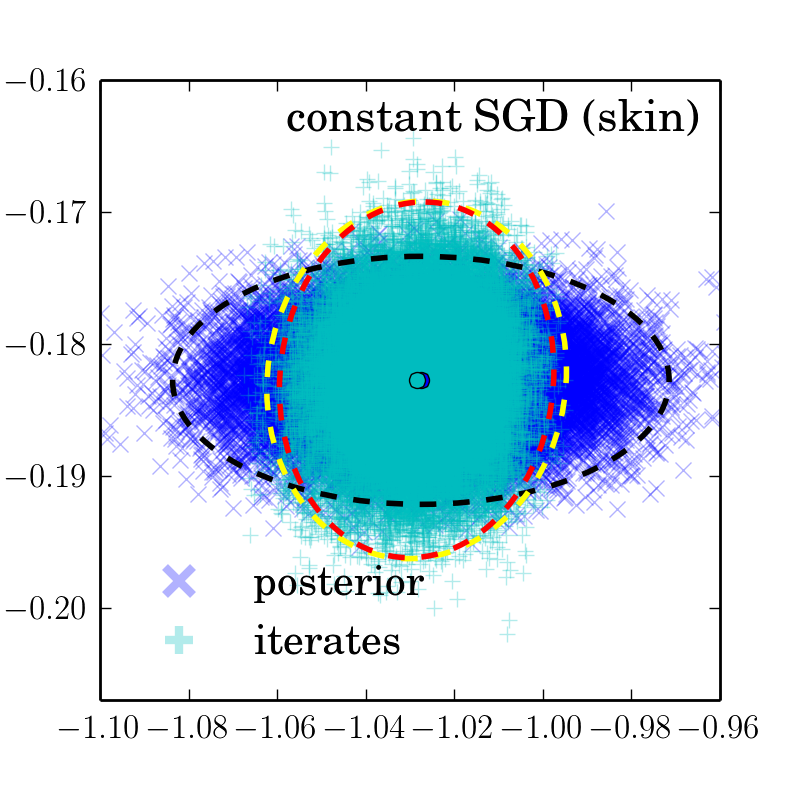}
  \includegraphics[width=.32\linewidth]{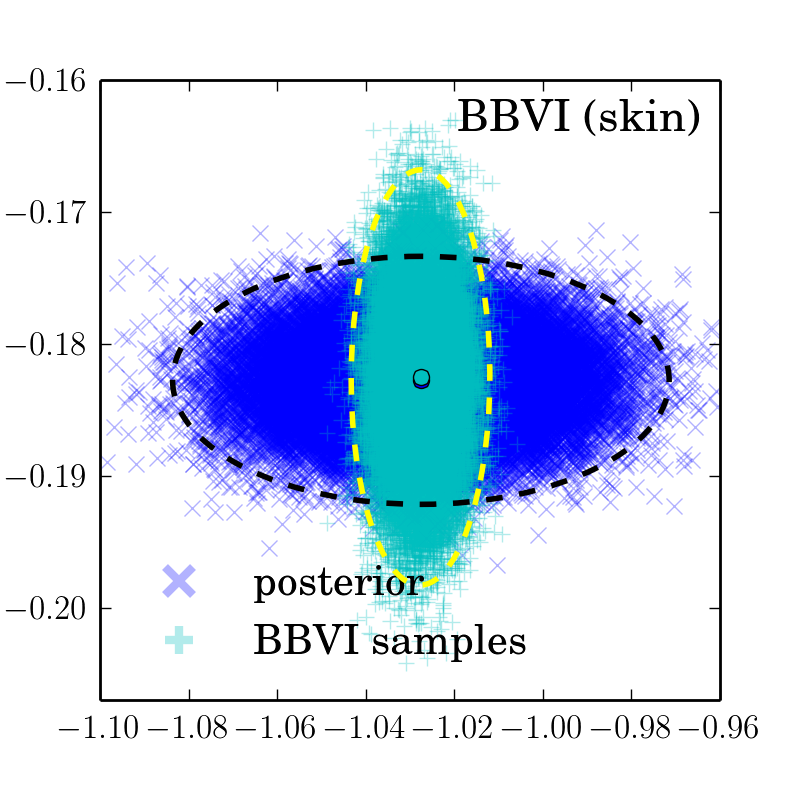}
  \caption{
    Posterior distribution $f(\theta) \propto \exp\left\{-N{\cal L}(\theta)\right\}$ (blue)
    and stationary sampling distributions $q(\theta)$ of the iterates of SGD (cyan) or
    black box variational inference (BBVI) based on reparameterization gradients.
    Rows: linear regression (top) and
    logistic regression (bottom) discussed in Section~\ref{sec:experiments}. Columns:
    full-rank preconditioned constant SGD (left), constant SGD (middle),
    and BBVI{\small~\citep{kucukelbir2015automatic}} (right). We show projections
    on the smallest and largest principal component of the posterior. The plot also shows
    the empirical covariances (3 standard deviations) of the posterior (black), the covariance of
    the samples (yellow), and their prediction (red) in
    terms of the Ornstein-Uhlenbeck process, Eq.~\ref{eq:stationaryvariance}.
  }
  \label{fig:regression}
\end{figure}

\begin{figure}[htbp] \centering
  \includegraphics[width=0.33\linewidth]{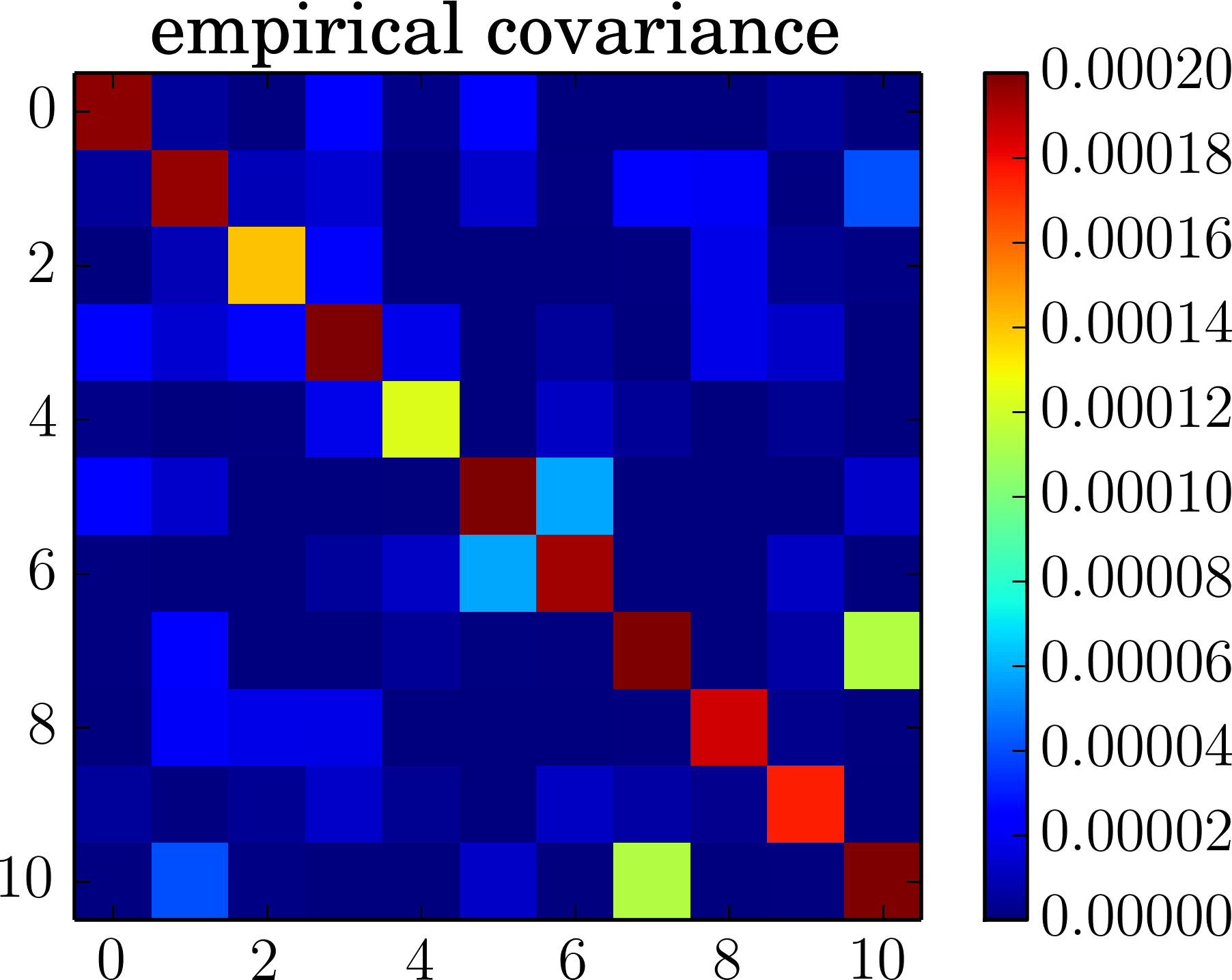}
  \includegraphics[width=0.33\linewidth]{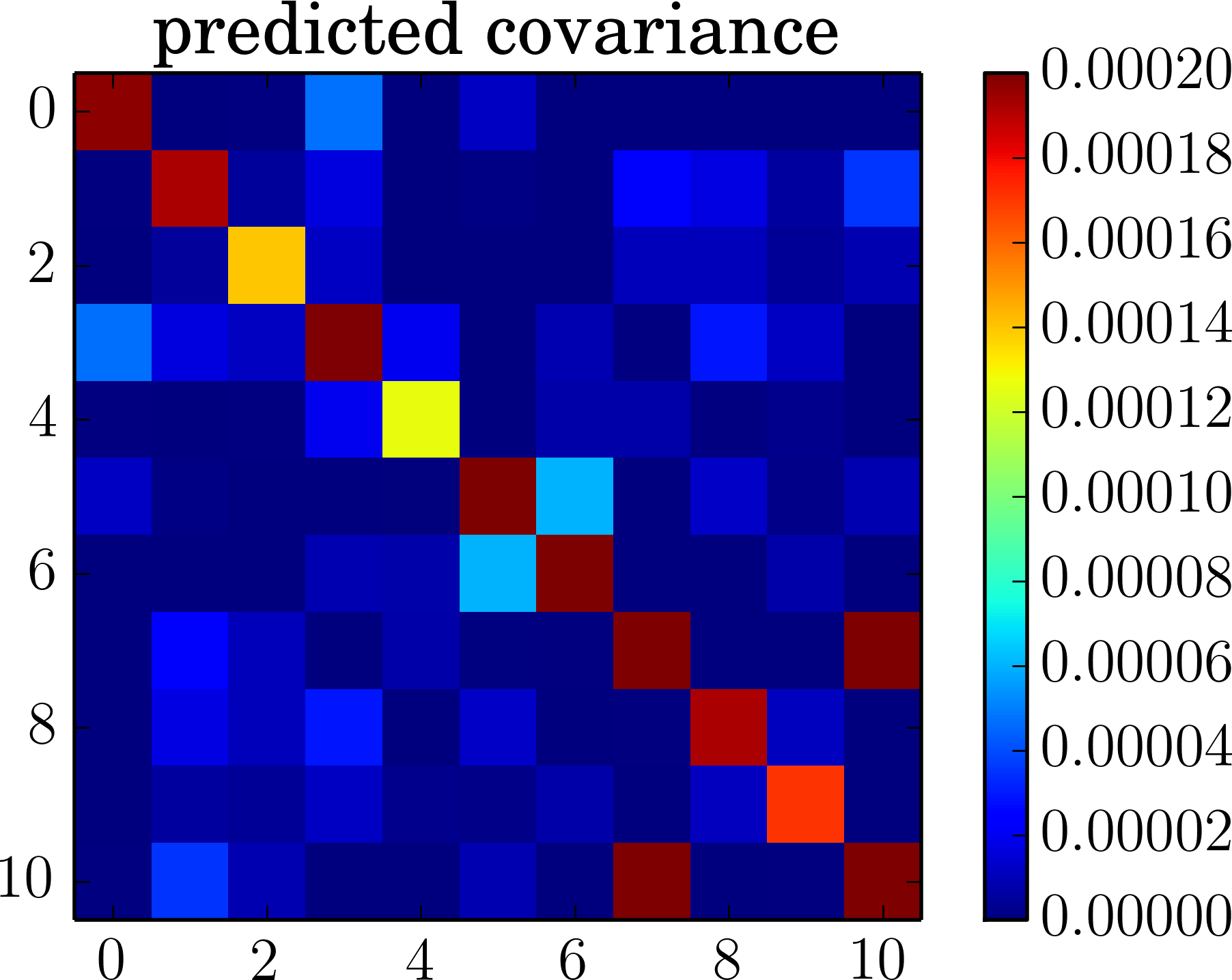}
  \caption{Empirical and predicted covariances of the iterates of
    stochastic gradient descent, where the prediction is based on
    Eq.~\ref{eq:stationaryvariance}.  We used linear regression on the
    wine quality data set as detailed in Section~\ref{sec:toy_data}. }
  \label{fig:covariance}
\end{figure}

Fig.~\ref{fig:regression} shows an example. Here we illustrate two
Bayesian posteriors---from a linear regression problem (top) and a logistic
regression problem (bottom)---along with iterates from a constant SGD
algorithm. In these figures, we set the parameters of the optimization
to values that minimize the Kullback-Leibler (KL) divergence between
the stationary distribution of the OU process and the
posterior---these results come from our theorems below. The left plots
optimize both a preconditioning matrix and the step size;
the middle plots optimize only the step size (both are outlined in Section~\ref{sec:SGD_bare}).
We can see that the
stationary distribution of constant SGD can be made close to the exact
posterior.

Fig.~\ref{fig:covariance} also compares the empirical covariance of the
iterates with the predicted covariance in terms of
Eq.~\ref{eq:stationaryvariance}. The close match supports the
assumptions of Section~\ref{sec:formalism}.

We will use this perspective in three ways.  First, we develop optimal
conditions for constant SGD to best approximate the
posterior, connecting to well-known results around adaptive learning
rates and preconditioners~\citep{duchi2011adaptive,tieleman2012lecture}.
Second, we propose an algorithm for hyperparameter optimization based on constant SGD.
Third, we use it to analyze the stationary distribution
of stochastic gradient descent with momentum~\citep{polyak1964some}.

\subsection{Constant Stochastic Gradient Descent}
\label{sec:SGD_bare}

First, we show how to tune constant SGD's parameters to minimize 
KL divergence to the posterior; this is a type of variational
inference~\citep{Jordan:1999}. 
The analysis leads to three versions of constant SGD---one with a constant step size,
one with a full preconditioning matrix, and one with a diagonal
preconditioning matrix.  Each yields samples from an approximate
posterior, and each reflects a different tradeoff between efficiency and accuracy.
Finally, we show how to use these 
algorithms to learn hyperparameters.

Assumption 4 from Section~\ref{sec:formalism} says that the posterior is
approximately Gaussian in the region that the stationary distribution
focuses on,
\begin{align}
\label{eq:posterior}
f(\theta)  \propto   \exp\left\{-\textstyle\frac{N}{2}\theta^\top A \theta \right\}.
\end{align}
The scalar $N$ corrects the averaging in equation \ref{eq:loss-one}.
Furthermore,
in this section we will consider a more general SGD scheme that may involve
a \emph{preconditioning matrix} $H$ instead of a scalar learning rate $\epsilon$:
\begin{align}
  \theta_{t+1} = \theta_t - H \hat{g}_S(\theta(t)). \nonumber
\end{align}

We will set the parameters of SGD to
minimize the KL divergence between the stationary distribution
$q(\theta)$ (Eqs.~\ref{eq:stationarydistribution} and
\ref{eq:stationaryvariance}) and the posterior $f(\theta)$ (Eq.~\ref{eq:posterior}). This involves
the the learning rate $\epsilon$
or more generally the preconditioning matrix $H$
and the minibatch size $S$:
\begin{align}
\{H^{*},S^{*}\}  = \arg\min_{H,S} KL(q \mid\mid f). \nonumber
\end{align}
First, consider a scalar learning rate $\epsilon$
(or a trivial preconditioner $H = \epsilon {\bf I}$).
The distributions $f(\theta)$ and $q(\theta)$ are both
Gaussians. Their means coincide, at the minimum of the loss, and so
their KL divergence is
\begin{align}
KL(q  \,||\, f ) & =  -\E_{q}[\log f(\theta)] +  \E_{q}[\log q(\theta)] \nonumber \\
& =  \textstyle\frac{1}{2}\left( N \E_q[\theta^\top A\theta] - \log |NA| -
  \log |\Sigma| - D \right)\nonumber \nonumber \\
& =  \textstyle\frac{1}{2}\left( N {\rm Tr}(A\Sigma) - \log |NA| -
  \log |\Sigma| - D \right),\nonumber
\end{align} where $|\cdot|$ is the
determinant and $D$ is the dimension of $\theta$.

We suggest three variants of constant SGD that generate samples from
an approximate posterior.

\begin{theorem}[Constant SGD] 
Under Assumptions
  A1-A4, the constant learning rate that minimizes KL divergence
  from the stationary distribution of constant SGD to the posterior is
\begin{align}
\label{eq:version_1}
\epsilon^*  =   \textstyle2\frac{S}{N}\frac{D}{{\rm Tr}(BB^\top)}.
\end{align}
\label{thm:one}
\end{theorem}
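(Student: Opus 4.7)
The plan is to directly minimize the KL-divergence expression already derived in the text, using the Lyapunov equation \eqref{eq:stationaryvariance} to make the $\epsilon$-dependence of $\Sigma$ explicit. For a scalar learning rate (so $H = \epsilon \mathbf{I}$), $A$ and $BB^\top$ are fixed by the model and data, and the only quantity on the right-hand side of $\Sigma A + A\Sigma = \frac{\epsilon}{S} BB^\top$ that depends on $\epsilon$ is the prefactor. Since this is a linear Lyapunov equation in $\Sigma$, I would observe that the solution scales linearly in $\epsilon$: if $\Sigma_0$ solves $\Sigma_0 A + A\Sigma_0 = \frac{1}{S} BB^\top$, then $\Sigma = \epsilon \Sigma_0$.

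Next I would reduce the two $\epsilon$-dependent pieces of the KL to simple closed forms. For the trace term, I would take the trace of the Lyapunov equation, use cyclicity to note $\mathrm{Tr}(\Sigma A) = \mathrm{Tr}(A\Sigma)$, and conclude
\begin{align*}
\mathrm{Tr}(A\Sigma) \;=\; \frac{\epsilon}{2S}\,\mathrm{Tr}(BB^\top).
\end{align*}
For the log-determinant term, the scaling $\Sigma = \epsilon \Sigma_0$ gives $\log|\Sigma| = D\log\epsilon + \log|\Sigma_0|$, where $\log|\Sigma_0|$ does not depend on $\epsilon$. Substituting into the expression
\begin{align*}
KL(q \,\|\, f) \;=\; \tfrac{1}{2}\bigl( N\,\mathrm{Tr}(A\Sigma) - \log|NA| - \log|\Sigma| - D \bigr)
\end{align*}
yields, up to $\epsilon$-independent constants,
\begin{align*}
KL(q \,\|\, f) \;=\; \tfrac{1}{2}\Bigl( \tfrac{N\epsilon}{2S}\mathrm{Tr}(BB^\top) \;-\; D\log\epsilon \Bigr) + \text{const}.
\end{align*}

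Finally I would differentiate with respect to $\epsilon$, set the derivative to zero, and solve:
\begin{align*}
\frac{\partial KL}{\partial \epsilon} \;=\; \tfrac{1}{2}\Bigl( \tfrac{N}{2S}\mathrm{Tr}(BB^\top) - \tfrac{D}{\epsilon} \Bigr) \;=\; 0 \quad\Longrightarrow\quad \epsilon^* = \frac{2SD}{N\,\mathrm{Tr}(BB^\top)},
\end{align*}
matching \eqref{eq:version_1}. Convexity in $\epsilon$ (the objective is a sum of a linear and a $-\log$ term on $\epsilon>0$) confirms this is the global minimum on the positive axis.

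There is no real obstacle; the only subtlety worth flagging is justifying that $\Sigma$ scales linearly in $\epsilon$, which follows from uniqueness of the solution to the Lyapunov equation under Assumption 4 (since $A \succ 0$ implies a unique positive-definite $\Sigma$). Invertibility of $\Sigma$, needed so that $\log|\Sigma|$ is finite, is guaranteed by the full-rank assumption on $C = BB^\top$ discussed at the end of Section~\ref{sec:formalism}.
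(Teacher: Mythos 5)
Your proposal is correct and follows essentially the same route as the paper's proof: both use the Lyapunov equation to obtain $\mathrm{Tr}(A\Sigma) = \frac{\epsilon}{2S}\mathrm{Tr}(BB^\top)$ and the scaling $\Sigma \propto \epsilon$ to reduce $\log|\Sigma|$ to $D\log\epsilon$ plus a constant, then minimize the resulting one-dimensional objective. Your added remarks on uniqueness of the Lyapunov solution and convexity are sound but not needed beyond what the paper already does.
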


\begin{proof}
To prove this claim, we face the problem that the covariance of the
stationary distribution depends indirectly on $\epsilon$ through
Eq.~\ref{eq:stationaryvariance}. Inspecting this equation reveals
that $\Sigma_0 \equiv \textstyle\frac{S}{\epsilon}\Sigma$ is
independent of $S$ and $\epsilon$. This simplifies the entropy term
$\log |\Sigma| = D \log (\epsilon/S) + \log |\Sigma_0|$. Since
$\Sigma_0$ is constant, we can neglect it when minimizing KL
divergence.

We also need to simplify the term ${\rm Tr}(A\Sigma)$, which still
depends on $\epsilon$ and $S$ through $\Sigma$. To do this, we again use
Eq.~\ref{eq:stationaryvariance}, from which follows that
${\rm Tr}(A\Sigma) = \frac{1}{2}({\rm Tr}(A\Sigma)+{\rm Tr}(\Sigma
A)) = \frac{\epsilon}{2S}{\rm Tr}(BB^\top)$.
The KL divergence is therefore, up to constant terms,
\begin{align}
KL(q  || f ) \stackrel{\mathrm{c}}{=} \textstyle\frac{\epsilon\, N}{2S} {\rm Tr}(BB^\top) - D \log (\epsilon/S).
\end{align}
Minimizing KL divergence over $\epsilon/S$ results in Eq.~\ref{eq:version_1} for the optimal learning rate. 
\end{proof}

Theorem~\ref{thm:one} suggests that the learning rate should be chosen inversely
proportional to the average of diagonal entries of the noise covariance,
and proportional to the ratio between the minibatch size and dataset
size.

Since both the posterior $f$ and the variational distribution $q$ are Gaussian, one might wonder
if also the reverse KL divergence is viable. 
While the KL divergence can be computed up to a constant, we cannot remove its dependence on the unknown stationary distribution
$\Sigma$ using Eq.~\ref{eq:stationaryvariance}, unless $A$ and $\Sigma$ commute.  This setup is discussed in Appendix \ref{sec:app_reverseKL}.

Instead of using a scalar learning rate, we now consider
a positive-definite preconditioning matrix $H$. This gives us more tuning parameters
to better approximate the posterior.

\begin{theorem}[Preconditioned constant SGD]
The preconditioner for constant SGD that
minimizes KL divergence from the stationary distribution to the
posterior is 
\begin{align}
H^* = \textstyle \frac{ 2S}{N} (BB^\top)^{-1}.
\label{eq:version_2}
\end{align}
\label{thm:two}
\end{theorem}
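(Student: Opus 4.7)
The plan is to recycle the OU-process analysis of Section~\ref{sec:formalism} but with a preconditioning matrix $H$ in place of the scalar $\epsilon$, and then exploit the fact that the KL divergence between two zero-mean Gaussians is nonnegative and vanishes exactly when their covariances coincide. Since preconditioned SGD has enough free parameters (an entire matrix $H$ rather than a single scalar), I expect that the posterior covariance $(NA)^{-1}$ can be matched \emph{exactly}, so the proof reduces to a verification rather than a genuine constrained optimization.

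Concretely, I would first generalize Eq.~\ref{eq:OU-process}. The preconditioned update $\theta_{t+1} = \theta_t - H\hat{g}_S(\theta_t)$ under Assumptions 1--4 yields the SDE $d\theta = -HA\,\theta\,dt + \tfrac{1}{\sqrt{S}} HB\,dW$. This is still a multivariate OU process (now with drift matrix $HA$), whose Gaussian stationary covariance $\Sigma$ solves the Lyapunov equation
\begin{align}
HA\Sigma + \Sigma A H = \tfrac{1}{S}\, H B B^\top H,
\end{align}
using that $H$ is symmetric positive-definite. The posterior $f(\theta)\propto\exp(-\tfrac{N}{2}\theta^\top A\theta)$ has covariance $\Sigma_f = (NA)^{-1}$, and since both $q$ and $f$ are zero-mean Gaussians, $KL(q\|f)\ge 0$ with equality iff $\Sigma = \Sigma_f$. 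The remaining task is therefore to find $H$ such that the Lyapunov equation admits the ansatz $\Sigma^\ast = \tfrac{1}{N}A^{-1}$. Substituting this ansatz, the left-hand side collapses to $\tfrac{2}{N}H$, so $H$ must satisfy $\tfrac{2}{N}H = \tfrac{1}{S}HBB^\top H$. Multiplying by $H^{-1}$ on the right and using invertibility of $BB^\top$ (assumed full rank at the end of Section~\ref{sec:formalism}) gives the unique solution $H^\ast = \tfrac{2S}{N}(BB^\top)^{-1}$, matching Eq.~\ref{eq:version_2}. Nonnegativity of KL divergence then promotes this critical point to a global minimizer, the minimum value being $0$.

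The main obstacle is really bookkeeping around the Lyapunov equation: because the drift $HA$ is not symmetric even though $H$ and $A$ individually are, the left-hand side cannot be simplified by the cyclic trace trick that carried the proof of Theorem~\ref{thm:one}. Plugging in the desired covariance $\Sigma^\ast = (NA)^{-1}$ directly sidesteps this difficulty and avoids the need to differentiate a complicated KL expression in $H$. If one instead wanted to derive $H^\ast$ without guessing, one could argue that among all symmetric PD $H$ the stationary $\Sigma$ is a continuous (in fact rational) function of $H$, surjective onto the symmetric PD cone, so the existence of an $H$ matching any target covariance---including $\Sigma_f$---is guaranteed and uniqueness then follows from the Lyapunov equation as above.
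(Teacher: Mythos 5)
Your proof is correct, but it takes a genuinely different route from the paper's. The paper computes the KL divergence explicitly as a function of $H$: it uses the cyclic trace property to reduce ${\rm Tr}(A\Sigma)$ to $\frac{1}{2S}{\rm Tr}(BB^\top H)$, and then—this is the nontrivial step—shows that $Q=\Sigma H^{-1}$ satisfies an $H$-independent Lyapunov equation $QA+AQ^\top=\frac{1}{S}BB^\top$, so that $\log|\Sigma|=\log|H|+\log|Q|$ splits off the $H$-dependence of the entropy term; differentiating the resulting expression in $H$ yields Eq.~\ref{eq:version_2}. You instead exploit the fact that a full matrix $H$ has enough degrees of freedom to match the posterior covariance exactly: you substitute the ansatz $\Sigma^\ast=\frac{1}{N}A^{-1}$ into the Lyapunov equation $HA\Sigma+\Sigma AH=\frac{1}{S}HBB^\top H$, solve for $H$, and invoke nonnegativity of the KL divergence between Gaussians to conclude global optimality. (To be fully airtight you should note that, for this $H^\ast$, the Lyapunov equation has a \emph{unique} solution because $HA$ is a stable matrix, so the verified $\Sigma^\ast$ really is the stationary covariance; this is immediate since $H$ and $A$ are positive definite.) Your argument is shorter, sidesteps the matrix derivative entirely, and delivers the paper's Corollary~2 (that preconditioning with $\frac{2S}{N}(BB^\top)^{-1}$ yields exact posterior samples) as a byproduct rather than a separate claim. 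What it gives up is the explicit KL objective in Eq.~\ref{eq:mod_KL}, which the paper needs for the constrained settings where exact matching is impossible—most notably Corollary~1 on the optimal \emph{diagonal} preconditioner, and the square-root preconditioner of Appendix~\ref{sec:sqrt}; your zero-KL argument cannot be adapted to those cases, whereas the paper's derivative-based derivation restricts naturally. Your closing remark about surjectivity of the map $H\mapsto\Sigma$ onto the PD cone is unsubstantiated as stated, but since you verify the ansatz directly you never actually rely on it.
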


\begin{proof}
To prove this, we need the Ornstein-Uhlenbeck process which corresponds
to preconditioned SGD. Replacing the constant learning rate in  Eq.~\ref{eq:OU-process} with a positive-definite preconditioning matrix $H$ results in
\begin{align}
d \theta(t)  =  - HA \,\theta(t) dt + \invS H B \, dW(t). \nonumber
\end{align}
All our results carry over after substituting $\epsilon A  \leftarrow  HA, \;  \epsilon B  \leftarrow  HB$.
Eq.~\ref{eq:stationaryvariance}, after the transformation and
multiplication by $H^{-1}$ from the left, becomes
\begin{align}
\label{eq:inproofthm2}
A \Sigma   + H^{-1} \Sigma A H= \textstyle\frac{1}{S} BB^\top H.
\end{align}
Using the cyclic property of the trace, this implies that
\begin{align}
\label{eq:thrm2_energy}
{\rm Tr}(A\Sigma) = \textstyle\frac{1}{2}({\rm Tr}(A\Sigma) + {\rm Tr}(H^{-1}A\Sigma H)) = \textstyle\frac{1}{2S}{\rm Tr}(BB^\top H).
\end{align}
Consider now the log determinant term, $\log |\Sigma|$, which still has an implicit dependence on $H$. We first define $Q = \Sigma H^{-1}$, hence $Q^\top = H^{-1}\Sigma$ since $\Sigma$, $H$ and $H^{-1}$ are symmetric. Eq.~\ref{eq:inproofthm2} can be written as $A \Sigma H^{-1} +  H^{-1} \Sigma A = \textstyle\frac{1}{S} BB^\top$, which is equivalent to $QA + AQ^\top = \textstyle\frac{1}{S} BB^\top$. Thus, we see that $Q$ is independent of $H$. The log determinant term is up to a constant $\log| \Sigma| = \log|H| + \log |Q|. $
Combining Eq.~\ref{eq:thrm2_energy} with this term, the  KL divergence is up to a constant
\begin{align}
\label{eq:mod_KL}
KL(q  || f ) & \stackrel{\mathrm{c}}{=}  
		  \textstyle\frac{N}{2S}{\rm Tr}(BB^\top H) +\log|H| + \log |Q|.
\end{align}
Taking derivatives with respect to the entries of $H$ results in Eq.~\ref{eq:version_2}.
\end{proof}

In high-dimensional applications, working with large dense matrices is impractical. In those settings
we can constrain the preconditioner to be diagonal.
The following corollaries are based on the proof of Theorem~\ref{thm:two}:

\begin{corollary}
The optimal diagonal preconditioner for SGD that
minimizes KL divergence to the posterior is
$H_{kk}^* = \textstyle \frac{ 2S}{N BB_{kk}^\top}$.
\end{corollary}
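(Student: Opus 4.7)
The plan is to specialize the argument used in the proof of Theorem~\ref{thm:two} to the subclass of preconditioners constrained to be diagonal. From that proof, the only $H$-dependent contributions to $KL(q \,\|\, f)$ are the trace term $\mathrm{Tr}(BB^\top H)$ and the log-determinant term $\log|H|$; the remaining $\log|Q|$ is constant in $H$ because $Q$ is pinned down by the $H$-free Lyapunov equation $QA + AQ^\top = \tfrac{1}{S}BB^\top$. This reduction is inherited without change after restricting $H$ to be positive-definite and diagonal, since diagonality is a stronger assumption than positive-definiteness.

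Writing $H = \mathrm{diag}(h_1,\ldots,h_D)$, the two surviving terms become pure sums,
\begin{equation*}
  \mathrm{Tr}(BB^\top H) = \sum_{k=1}^D (BB^\top)_{kk}\, h_k, \qquad \log|H| = \sum_{k=1}^D \log h_k,
\end{equation*}
so the minimization over diagonal $H$ decouples into $D$ independent scalar problems of the form
\begin{equation*}
  \min_{h_k > 0}\ \Bigl[\tfrac{N}{2S}(BB^\top)_{kk}\, h_k \ -\ \log h_k\Bigr],
\end{equation*}
with the sign convention fixed so that the objective is strictly convex on $(0,\infty)$, exactly as in Theorem~\ref{thm:two}.

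Each scalar subproblem has a unique interior minimizer obtained by setting the derivative to zero: $\tfrac{N}{2S}(BB^\top)_{kk} = 1/h_k$, hence $h_k^\star = \tfrac{2S}{N(BB^\top)_{kk}}$, which is precisely the claimed $H_{kk}^\star$. The positive second derivative $1/h_k^2$ certifies that this critical point is the global minimum over positive $h_k$, and assembling the $h_k^\star$ across $k$ gives the stated diagonal preconditioner.

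The only point that requires care is confirming that restricting $H$ to diagonal matrices does not disturb the ``$Q$ is independent of $H$'' observation inherited from Theorem~\ref{thm:two}; this is immediate, since the Lyapunov equation characterizing $Q$ involves only $A$ and $B$ and never references $H$ at all. After that remark the proof is a textbook decoupling-plus-calculus argument, and I anticipate no substantive obstacle.
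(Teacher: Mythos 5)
Your proof is correct and follows essentially the same route as the paper's one-line argument: restrict the objective in Eq.~\ref{eq:mod_KL} to diagonal $H$, observe that the $\log|Q|$ term is $H$-independent, and solve the $D$ decoupled scalar problems. You were right to flag the sign convention --- as printed, Eq.~\ref{eq:mod_KL} has a sign typo on the log-determinant terms (they should enter with a minus sign, as is clear from the derivation of $KL(q\,\|\,f)$ preceding Theorem~\ref{thm:one}), and with that correction your calculus yields exactly $H_{kk}^* = 2S/(N(BB^\top)_{kk})$.
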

\begin{proof}
This follows from Eq.~\ref{eq:mod_KL}, where we  restrict the preconditioning matrix to be diagonal.
\end{proof}

\begin{corollary}
Under assumptions A1-A4, preconditioning with the full inverse noise covariance as in Theorem~\ref{thm:two}
results in samples from the exact posterior.
\end{corollary}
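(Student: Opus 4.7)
The plan is to verify directly that when $H = H^* = \tfrac{2S}{N}(BB^\top)^{-1}$, the Gaussian stationary distribution $q(\theta)$ of the preconditioned OU process coincides with the Gaussian posterior $f(\theta) \propto \exp\{-\tfrac{N}{2}\theta^\top A\theta\}$. Since both distributions are zero-mean Gaussians (by Assumption A4 and the OU analysis), it suffices to show that their covariance matrices agree, i.e.\ that $\Sigma = (NA)^{-1}$ is the stationary covariance under $H^*$.

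First, I would recall the modified Lyapunov-type equation that the stationary covariance must satisfy under preconditioning, derived in the proof of Theorem~\ref{thm:two}:
\begin{align*}
A\Sigma + H^{-1}\Sigma A H \;=\; \tfrac{1}{S}\,BB^\top H.
\end{align*}
Then I would substitute the candidate $\Sigma = \tfrac{1}{N}A^{-1}$ together with $H = H^*$ and simplify both sides. On the right, $\tfrac{1}{S} BB^\top \cdot \tfrac{2S}{N}(BB^\top)^{-1} = \tfrac{2}{N} I$. On the left, $A\Sigma = \tfrac{1}{N} I$, and a short cancellation using $H^{-1} = \tfrac{N}{2S} BB^\top$ shows $H^{-1}\Sigma A H = \tfrac{1}{N} I$ as well, so the two sides match.

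Having established that $\Sigma = (NA)^{-1}$ solves the stationarity equation, I would then invoke uniqueness of the solution (which follows from positive-definiteness of $HA$; see the discussion following Eq.~\ref{eq:stationaryvariance} and Appendix~\ref{sec:covariance}) to conclude that this is indeed the covariance of the stationary distribution. Since $q$ and $f$ are Gaussians with the same mean and the same covariance, they are identical, and the iterates of preconditioned constant SGD with $H = H^*$ are samples from the exact posterior under Assumptions A1--A4.

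There is no real obstacle here beyond the bookkeeping: the corollary is essentially the observation that the optimal $H$ found in Theorem~\ref{thm:two} by minimizing a KL divergence over Gaussians actually drives that divergence to zero. The only subtle point worth emphasizing is that this exactness is contingent on the Gaussianity of $f$ (i.e., Assumption A4 holding globally rather than just as a local quadratic approximation) and on $BB^\top$ being invertible, as noted at the end of Section~\ref{sec:formalism}; otherwise $H^*$ is not well-defined and one must regularize as suggested there.
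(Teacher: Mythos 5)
Your proposal is correct and follows essentially the same route as the paper: substitute $H^*$ into the preconditioned stationarity equation (Eq.~\ref{eq:inproofthm2}), reduce the right-hand side to $\frac{2}{N}{\bf I}$, and verify that $\Sigma = \frac{1}{N}A^{-1}$ solves it. Your added remarks on uniqueness of the Lyapunov solution and on the invertibility of $BB^\top$ are sensible but not part of the paper's (terser) argument.
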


\begin{proof}
Consider Eq.~\ref{eq:inproofthm2}. Inserting $H = \textstyle \frac{ 2S}{N} (BB^\top)^{-1}$ results
in $A\Sigma + H^{-1}\Sigma A H = \textstyle{\frac{2}{N}} {\bf I}$ which is solved by $\Sigma = A^{-1}/N$
which is the posterior covariance.
\end{proof}

We showed that the optimal diagonal preconditioner is the inverse of
the diagonal part of the noise matrix.  Similar preconditioning
matrices have been suggested earlier in optimal control theory based
on very different arguments, see~\citep{widrow1985adaptive}.  Our
result also relates to AdaGrad and its
relatives~\citep{duchi2011adaptive, tieleman2012lecture}, which also
adjust the preconditioner based on the square root of the diagonal entries of the noise covariance.
In Appendix~\ref{sec:sqrt}
we derive an optimal global learning rate for AdaGrad-style diagonal preconditioners.
In Section~\ref{sec:experiments}, we compare three versions of
constant SGD for approximate posterior inference: one with a scalar
step size, one with a dense preconditioner, and one with a diagonal
preconditioner.

\parhead{Remark on estimating the noise covariance.} In order to use our theoretical insights in practice, we need to estimate the 
stochastic gradient noise covariance $C \equiv BB^\top$. We do this in an online manner. 
As before, let $g_t$ be the full gradient, $\hat{g}_{S,t}$
be the stochastic gradient of the full minibatch, and $\hat{g}_{1,t}$ be the stochastic gradient of the first sample in the minibatch at time $t$ (which has a much larger variance
if $S \gg 1$).
For large $S$, we can approximate $g_t \approx \hat{g}_{S,t}$, and thus obtain an estimator of the noise covariance by $(\hat{g}_{1,t}-\hat{g}_{S,t})(\hat{g}_{1,t}-\hat{g}_{S,t})^\top$.
Following~\citet{ahn2012bayesian}, we can now build an online estimate $C_t$ that approaches $C$ by the following recursion,
\be
\label{eq:estimate-noise}
C_{t} = (1-\kappa_t) C_{t-1} + \kappa_t (\hat{g}_{1,t}-\hat{g}_{S,t})(\hat{g}_{1,t}-\hat{g}_{S,t})^\top.
\ee
Above, $\kappa_t$ is a decreasing learning rate.
  \citet{ahn2012bayesian} have proven that such an online average
  converges to the noise covariance in the optimum at long times (provided that $\kappa_t \sim 1/t$ and that $N$ is sufficiently large).
  We found that this online estimator works well in practice, even though our theoretical assumptions would
  require preconditioning SGD with the true noise covariance in finite time. Regarding the computational overhead
  of this procedure, note that similar online estimates of the gradient noise are carried out in adaptive SGD
  schemes such as AdaGrad~\citep{duchi2011adaptive} or RMSProp~\citep{tieleman2012lecture}.
  When using a diagonal approximation to the noise covariance,
  the costs are proportional in the number of dimensions
  and mini-batch size; this efficiency means that the online estimate does not spoil the efficiency of SGD. Full preconditioning
  scales quadratically in the dimension and is therefore impractical in many real-word setups.

\subsection{Constant SGD as Variational EM}
\label{sec:hyperparameter_theory}

Consider a supervised probabilistic model with 
joint distribution $p(y,\theta|x,\lambda) = p(y|x,\theta) p(\theta | \lambda)$ that factorizes into a likelihood and prior, respectively. Our goal is to find optimal hyperparameters $\lambda$. 
Jointly point-estimating $\theta$ and $\lambda$ by following gradients of the log joint leads to overfitting or degenerate solutions. 
This can be prevented in a Bayesian approach, where we treat the parameters $\theta$ as latent variables. 
In Empirical Bayes (or type-II maximum likelihood),
we maximize the \emph{marginal} likelihood of the data, integrating out
the main model parameters:
\begin{equation}
\begin{split}
\textstyle \lambda^\star = \arg\max_{\lambda}\log
p(y|x,\lambda)=\arg\max_{\lambda}\log\int_\theta p(y,\theta|x,\lambda)d\theta.
\nonumber
\end{split}
\end{equation}
When this marginal log-likelihood is intractable, a common approach is
to use \emph{variational expectation-maximization (VEM)}
\citep{Bishop:2006}, which iteratively optimizes a variational lower bound on the
marginal log-likelihood over $\lambda$.  If we approximate the
posterior $p(\theta|x,y,\lambda)$ with some distribution $q(\theta)$,
then VEM tries to find a value for $\lambda$ that maximizes the
expected log-joint probability $\mathbb{E}_q[\log
  p(\theta,y|x,\lambda)]$.

Constant SGD gives rise to a simple VEM algorithm that applies to a large class of differentiable models.
Define ${\cal L}(\theta,\lambda) = -\log p(y,\theta \g x,\lambda) $. 
If we interpret the stationary distribution of SGD as a variational
approximation to a model's posterior,  we can justify 
following a stochastic gradient descent scheme on both $\theta$ and $\lambda$:
\begin{equation}
\theta_{t+1}  = \theta_t  - \epsilon^*\nabla_\theta {\cal L}(\theta_t,\lambda_t);\quad
\lambda_{t+1}  = \lambda_t - \rho_t \nabla_\lambda {\cal
  L}(\theta_t,\lambda_t). 
\label{eq:variationalEM}
\end{equation}
While the update for $\theta$ uses the optimal constant learning rate $\epsilon^*$ and therefore
samples from an approximate posterior, the $\lambda$ update uses a decreasing learning
rate $\rho_t$ and therefore converges to a local optimum.
The result is a type of VEM algorithm.

We stress that the optimal constant learning rate $\epsilon^*$ is
not unknown, but can be estimated based on an online estimate of the noise covariance $C \equiv BB^\top$, 
as given in Eq.~\ref{eq:estimate-noise}.  In Section~\ref{sec:experiments} we show that gradient-based hyperparameter
learning is a cheap alternative to cross-validation.

\subsection{Stochastic Gradient with Momentum}
\label{sec:momentum}
The continuous-time formalism also allows us to explore extensions
of classical SGD. One of the most popular methods is stochastic gradient with momentum~\citep{polyak1964some,sutskever2013importance}.
Here, we give a version of this algorithm that allows us to sample from an approximate posterior.

SGD with momentum doubles the dimension of parameter space in introducing an additional \emph{momentum} variable $\v$ that has the same dimension as $\theta$.
Position and momentum are coupled in such a way that the algorithm keeps
memory of some of its past gradients.  The updates of SGD with momentum are
\be
\v(t+1) & = & (1-\mu) \v(t) - \epsilon \, \hat{g}_S(\theta(t))\nonumber  \\
\theta(t+1) & = & \theta(t) + \v(t+1). \nonumber
\ee
This involves the damping coefficient $\mu \in [0,1]$.
For $\mu=1$ (infinite damping or overdamping), the momentum information gets lost and we recover SGD. 

As before we assume a quadratic objective ${\cal L} = \frac{1}{2}\theta^\top A \theta$. 
Going through the same steps A1-A4 of Section~\ref{sec:formalism} that allowed us to
derive the Ornstein-Uhlenbeck process for SGD, we find
\be
\label{eq:momentum_continuous}
d \v        & = &  -\mu \v dt - \epsilon A \theta dt + \invS \epsilon B \,dW,\\
d \theta & = &  \v dt. \nonumber
\ee
We solve this set of stochastic equations asymptotically for the long-time limit. 
We reformulate the stochastic equation in terms of coupled equations for the moments.
(This strategy was also used by~\cite{li2015dynamics} in a more
restricted setting). 
The first moments of the set of coupled stochastic differential equations give
\be
d\E[\v]  =   -\mu  \E[\v] dt - \epsilon A \E[\theta] dt, \quad  d\E[\theta]  =   \E[\v]dt. \nonumber
\ee
Note we used that expectations commute with the differential operators. These deterministic equations have the simple solution $\E[\theta] = 0$ and $\E[\v]=0$,
which means that the momentum has expectation zero and the expectation of the position variable converges to the optimum (at $0$).

In order to compute the stationary distribution, we derive and
solve similar equations for the second moments. These calculations are
carried out in Appendix~\ref{sec:app_momentum}, where we derive the
following conditions:
\be
\E[\v\v^\top] & = &  \textstyle{\frac{\epsilon}{2}} \E[\theta \theta^\top] A +  \textstyle{\frac{\epsilon}{2}}  A \E[\theta \theta^\top], \\
 \mu \E[\v\v^\top] & = & \textstyle{\frac{\epsilon^2}{2S}} BB^\top. \nonumber
\label{eq:fdt}
\ee
Eq.~\ref{eq:fdt} relate to equilibrium thermodynamics, where 
$\E[\v\v^\top]$ is a matrix of expected kinetic energies, while
$\half(\E[\theta \theta^\top] A + A \E[\theta \theta^\top] )$ has the
interpretation of a matrix of expected potential energies.
The first equation says that energy conservation holds in expectation,
which is the case for an equilibrium system which exchanges
energy with its environment.
The second equation relates the covariance of the velocities with
$BB^\top$, which plays the role of a
matrix-valued temperature. This is known as the
fluctuation-dissipation theorem~\citep{Nyquist}. Combining both
equations and using $\Sigma = \E[\theta \theta^\top]$ yields
\be
\Sigma A + A \Sigma  & = & \textstyle{ \frac{\epsilon}{\mu S}} BB^\top.
\ee
This is exactly Eq.~\ref{eq:stationaryvariance} of SGD without
momentum, however, with the difference that the noise
covariance is re-scaled by a factor $ \frac{\epsilon}{ \mu S}$
instead of $ \frac{\epsilon}{S}$.

We have shown that $\epsilon$, $S$, and $\mu$ play similar roles: only the combination $ \textstyle{ \frac{\epsilon}{ \mu S}}$ affects the
KL divergence to the posterior. Thus, no single optimal constant learning rate exists---many combinations of $\epsilon$, $\mu$, and $S$ can
yield the same stationary distribution. But different choices of these parameters
will affect the dynamics of the Markov chain. For example, \citet{sutskever2013importance}
observe that, for a given effective learning rate $\frac{\epsilon}{\mu S}$,
using a smaller $\mu$ sometimes makes the \emph{discretized} dynamics of SGD more stable.
Also, using very small values of $\mu$ while holding $\frac{\epsilon}{\mu}$ fixed
will eventually increase the autocorrelation time of the Markov chain (but
this effect is often negligible in practice).

\section{Analyzing Stochastic Gradient MCMC Algorithms}
\label{sec:MCMC}
We have analyzed well-known stochastic optimization algorithms such as SGD, preconditioned SGD and SGD with momentum. 
We now investigate Bayesian sampling algorithms. A large class of modern MCMC
methods rely on stochastic gradients~\citep{welling2011bayesian, ahn2012bayesian, chen2014stochastic, ding2014bayesian, ma2015complete}.
The central idea is to add artificial noise to the stochastic gradient to asymptotically sample from the true posterior. 

In practice, however, the algorithms are used in combination with several heuristic approximations, such as non-vanishing learning rates or diagonal approximations to the Hessian. 
In this section, we use the variational perspective to quantify these biases and help understanding these algorithms better under more realistic assumptions.

\subsection{SGLD with Constant Rates}
To begin with, we we analyze the well-known Stochastic Gradient Langevin Dynamics by~\citet{welling2011bayesian}.
This algorithm has been carefully analyzed in the long time limit where the stochastic gradient noise vanishes as the learning
rate goes to zero, and where mixing becomes infinitely slow~\citep{sato2014approximation}.
Here we analyze an approximate version of the algorithm,
SGLD with a constant learning rate. We are interested in the stationary distribution of this approximate inference algorithm.

The discrete-time process that describes Stochastic Gradient Langevin dynamics is
\be
\theta_{t+1} & = & \theta_t - \textstyle{\frac{\epsilon}{2}} N\hat{\nabla}_\theta {\cal L}(\theta_t) + \sqrt{\epsilon} \, V(t),\nonumber
\ee
where $V(t)\sim {\cal N}(0,{\bf I})$ is a vector of independent Gaussian noises. Following assumptions 1--4 of Section~\ref{sec:formalism}, $V$ becomes the Wiener noise $dV$ and the corresponding continuous-time process is
\be
d\theta & = & -\half \epsilon N A\theta dt + \sqrt{\epsilon} dV +  \epsilon \invS N B \,dW.\nonumber
\ee
Above, $dV$ and $dW$ are vectors of independent Wiener noises, i.e. $\E[dW dV^\top]=\E[dV dW^\top]=0$.
The analog of Eq.~\ref{eq:stationaryvariance} is then
\be
\half N (A \Sigma +  \Sigma A) & = & {\bf I} + \textstyle{\frac{\epsilon N}{2S}} BB^\top. \nonumber
\ee
In the limit of $\epsilon \rightarrow 0$, we find that $\Sigma^{-1} = NA$, meaning the stationary distribution becomes identical to the posterior.
However, for non-zero $\epsilon$, there are discrepancies.
These correction-terms are positive. This shows that the posterior covariance is generally overestimated by Langevin dynamics,
which can be attributed to non-vanishing learning rates at long times.

\subsection{Stochastic Gradient Fisher Scoring}
\label{sec:fisher}

We now investigate Stochastic Gradient Fisher
Scoring~\citep{ahn2012bayesian}, a scalable Bayesian
MCMC algorithm.  We use the variational perspective to rederive the Fisher
scoring update and identify it as optimal. We also analyze the
sampling distribution of the truncated algorithm, one with diagonal
preconditioning (as it is used in practice), and quantify the bias
that this induces.

The basic idea here is that the stochastic gradient is preconditioned
and additional noise is added to the updates such that the algorithm
approximately samples from the Bayesian posterior.  More precisely,
 the update can be cast into the following form:
\begin{align}
\theta(t+1)  =  \theta(t) - \epsilon H \, \hat{g}(\theta(t)) + \sqrt{\epsilon} H E \, W(t).
\label{eq:SGFS}
\end{align}
The matrix $H$ is a preconditioner and $E W(t)$ is Gaussian noise; we
control the preconditioner and the covariance $EE^\top$ of the
noise. Stochastic gradient Fisher scoring suggests a preconditioning
matrix $H$ that leads to samples from the posterior even if the
learning rate $\epsilon$ is not asymptotically small.  We show here
that this preconditioner follows from our variational analysis.

\begin{theorem}[Stochastic Gradient Fisher Scoring]
Under Assumptions A1-A4, the positive-definite preconditioner $H$  in Eq.~\ref{eq:SGFS} that
minimizes KL divergence from the stationary distribution of SGFS to the posterior is
\begin{align}
\label{eq:fisher_H}
\textstyle H^{*}  =  \frac{2}{N}( \epsilon BB^\top + EE^\top)^{-1}.
\end{align}
\label{thm:three}
\end{theorem}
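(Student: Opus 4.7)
My plan is to parallel the proof of Theorem~\ref{thm:two} closely, with one new wrinkle: the SGFS update has two independent noise sources (the minibatch noise and the injected Gaussian noise) that must be combined into a single effective diffusion matrix before applying the OU-process machinery. First, I would take the continuous-time limit of Eq.~\ref{eq:SGFS} under Assumptions A1--A3. Writing the stochastic gradient as $\hat g(\theta) \approx A\theta + \invS B \xi$ and treating $W(t)$ as an independent Wiener increment, the drift contributes $-\epsilon H A\,\theta\, dt$, the minibatch noise contributes a term with diffusion $\frac{\epsilon}{\sqrt{S}} H B\, dW_1$, and the injected noise contributes $\sqrt{\epsilon}\, H E\, dW_2$ with $W_1 \perp W_2$. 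The resulting SDE is a multivariate OU process whose instantaneous noise covariance is
\[
H\!\left(\tfrac{\epsilon^{2}}{S} B B^{\top} + \epsilon\, E E^{\top}\right)\! H.
\]

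Second, I would write down the Lyapunov equation satisfied by the stationary covariance $\Sigma$. After dividing by the common factor $\epsilon$ on both sides this reads
\[
H A \Sigma + \Sigma A H \;=\; H\!\left(\tfrac{\epsilon}{S} B B^{\top} + E E^{\top}\right)\! H,
\]
which has exactly the same structure as Eq.~\ref{eq:inproofthm2} from Theorem~\ref{thm:two}, but with the effective noise matrix $\tfrac{\epsilon}{S} B B^{\top} + E E^{\top}$ in place of $\tfrac{1}{S} B B^{\top}$. This lets me reuse the same two tools: the cyclic-trace identity gives ${\rm Tr}(A\Sigma) = \tfrac{1}{2}\,{\rm Tr}\!\left(H(\tfrac{\epsilon}{S} B B^{\top} + E E^{\top})\right)$, and defining $Q = \Sigma H^{-1}$ shows that $Q$ satisfies a Lyapunov equation that does not involve $H$, so $\log|\Sigma| = \log|H| + \mathrm{const}$.

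Third, I would substitute these two identities into the Gaussian KL formula derived just before Theorem~\ref{thm:one}. Up to $H$-independent terms,
\[
KL(q\,\|\,f) \stackrel{\mathrm{c}}{=} \tfrac{N}{2}\,{\rm Tr}\!\left(H\!\left(\tfrac{\epsilon}{S} B B^{\top} + E E^{\top}\right)\right) - \log|H|.
\]
Differentiating with respect to the entries of the symmetric positive-definite matrix $H$ and setting the gradient to zero yields the unique minimiser
\[
H^{*} = \tfrac{2}{N}\!\left(\tfrac{\epsilon}{S} B B^{\top} + E E^{\top}\right)^{-1},
\]
which matches Eq.~\ref{eq:fisher_H} once the factor of $S$ is absorbed into the definition of $B$ (consistent with the SGLD conventions used one subsection earlier). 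As a sanity check, plugging $H^{*}$ back into the Lyapunov equation gives $\Sigma = (NA)^{-1}$, i.e.\ the exact posterior covariance, so the stationary distribution coincides with the Gaussian posterior and $KL=0$ is attained.

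The main obstacle I anticipate is bookkeeping: the two noise sources scale with different powers of $\epsilon$ (injected noise with $\sqrt{\epsilon}$, gradient noise with $\epsilon$), so one must be careful that after dividing through by $\epsilon$ in the Lyapunov equation the two contributions sit at the same order and can be combined. Everything else is a direct transcription of the Theorem~\ref{thm:two} argument, with the combined matrix $\tfrac{\epsilon}{S} B B^{\top} + E E^{\top}$ playing the role formerly played by $\tfrac{1}{S} B B^{\top}$.
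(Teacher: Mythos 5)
Your proposal is correct and follows essentially the same route as the paper's proof (given in Appendix~\ref{sec:sgfs}): combine the two independent noise sources into a single effective diffusion matrix, write the stationary Lyapunov equation, use the cyclic-trace identity to express ${\rm Tr}(A\Sigma)$ in terms of $H(\epsilon BB^\top + EE^\top)$, reparameterize $\Sigma = TH$ (your $Q=\Sigma H^{-1}$) to isolate the $\log|H|$ dependence, and minimize the resulting KL over $H$. The only cosmetic difference is that the paper fixes $S=1$ from the outset, which you correctly reconcile at the end, and your closing sanity check that $\Sigma=(NA)^{-1}$ matches the paper's remark that the optimal $H$ drives the KL divergence to zero.
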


\begin{proof}
To prove the claim, we go through the steps of Section~\ref{sec:formalism} to derive the corresponding Ornstein-Uhlenbeck process,
$d \theta(t)  =  -\epsilon HA \theta(t) dt + H \left[  \epsilon B + \sqrt{\epsilon} E\right] dW(t).$
For simplicity, we have set the minibatch size $S$ to $1$.
In Appendix~\ref{sec:sgfs}, we derive the following KL divergence between the posterior and the sampling distribution:
$$
KL(q||p)  =  - \textstyle\frac{N}{4}{\rm Tr}  (H(\epsilon BB^\top + EE^\top)) + \textstyle\frac{1}{2} \log |T| + \textstyle\frac{1}{2} \log |H| + \textstyle\frac{1}{2} \log |NA| + \textstyle\frac{D}{2}.
$$
($T$ is constant with respect to $H$, $\epsilon$, and $E$.)
We can now minimize this KL divergence over the parameters $H$ and $E$.
When $E$ is given, minimizing over $H$ gives Eq.~\ref{eq:fisher_H}.
\end{proof}

Eq.~\ref{eq:fisher_H} not only minimizes the KL
divergence, but makes it $0$, meaning that the stationary
sampling distribution \emph{is} the posterior. This solution
corresponds to the suggested Fisher Scoring update in the idealized
case when the sampling noise distribution is estimated
perfectly~\citep{ahn2012bayesian}. Through this update, the algorithm
thus generates posterior samples without decreasing the
learning rate to zero.  (This is in contrast to Stochastic Gradient
Langevin Dynamics by~\citet{welling2011bayesian}.)

In practice, however, SGFS is often used with a diagonal approximation of
the preconditioning matrix~\citep{ahn2012bayesian,ma2015complete}.
However, researchers have not explored how the stationary distribution
is affected by this truncation, which makes the algorithm only
approximately Bayesian.  
We can quantify its deviation from the exact posterior and we derive the optimal 
diagonal preconditioner, which follows from the KL divergence in Theorem~\ref{thm:three}:

\begin{corollary}
When approximating the Fisher scoring preconditioner by a diagonal matrix $H_{kk}^{*}$ or a scalar $H^*_{scalar}$, respectively,
then
$$H_{kk}^{*}  =  \frac{2}{N}( \epsilon BB^\top_{kk} + EE^\top_{kk})^{-1} {\textrm \quad  and \quad}
H^*_{scalar} = \frac{2D}{N} (\sum_k [ \epsilon BB^\top_{kk} + EE^\top_{kk}])^{-1}.$$
\end{corollary}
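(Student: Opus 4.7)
The strategy is to reuse the KL-divergence expression already derived in the proof of Theorem~\ref{thm:three}, namely
\[
KL(q\|p) \;=\; -\textstyle\frac{N}{4}\,\mathrm{Tr}\bigl(H\,M\bigr) \;+\;\textstyle\frac{1}{2}\log|H| \;+\;\text{const.},
\qquad M \equiv \epsilon BB^\top + EE^\top,
\]
and simply restrict the admissible $H$ to the subclass of interest (diagonal, resp.\ scalar). Since the optimization over unconstrained positive-definite $H$ already produced $H^{*}=\tfrac{2}{N}M^{-1}$ in Theorem~\ref{thm:three}, the corollary amounts to redoing the same stationarity calculation inside a smaller parameter set.

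\textbf{Diagonal case.} First I would parameterize $H=\mathrm{diag}(H_{11},\ldots,H_{DD})$. For such $H$, only the diagonal entries of $M$ enter the trace,
\[
\mathrm{Tr}(HM) \;=\; \sum_{k=1}^{D} H_{kk}\,M_{kk},
\qquad
\log|H| \;=\; \sum_{k=1}^{D} \log H_{kk},
\]
so the KL divergence decouples across coordinates. Differentiating with respect to $H_{kk}$ and setting the derivative to zero gives $-\tfrac{N}{4}M_{kk} + \tfrac{1}{2 H_{kk}} = 0$, i.e.\ $H_{kk}^{*} = \tfrac{2}{N M_{kk}} = \tfrac{2}{N(\epsilon BB^\top_{kk} + EE^\top_{kk})}$, which is the claimed formula.

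\textbf{Scalar case.} Next I would substitute $H = h\,\mathbf{I}$ with $h>0$. Then $\mathrm{Tr}(HM)=h\sum_k M_{kk}$ and $\log|H|=D\log h$, so the KL objective becomes a one-dimensional function $-\tfrac{Nh}{4}\sum_k M_{kk} + \tfrac{D}{2}\log h + \text{const.}$ Setting its $h$-derivative to zero yields $h^{*} = \tfrac{2D}{N\sum_k M_{kk}} = \tfrac{2D}{N\sum_k(\epsilon BB^\top_{kk} + EE^\top_{kk})}$, matching the stated $H^{*}_{\mathrm{scalar}}$.

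\textbf{Main obstacle.} There is no real obstacle here once Theorem~\ref{thm:three} is in hand: the decoupling across diagonal entries and the single-variable reduction in the scalar case are both immediate because the KL divergence is linear in $H$ plus a $\log\det$ term. The only subtlety worth flagging is verifying that the stationary point is indeed a minimizer within the constrained class (so the corollary is genuinely optimal and not merely stationary); this follows because $-\log h$ is strictly convex in $h$ and $-\log H_{kk}$ is strictly convex in each $H_{kk}$, while the trace term is linear, so the restricted objective is strictly convex on the positive orthant and the unique critical point is the global minimizer.
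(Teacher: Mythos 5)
Your proof is correct and takes essentially the same route as the paper, which gives no explicit proof and simply states that the corollary ``follows from the KL divergence in Theorem~\ref{thm:three}'': restricting that objective to diagonal or scalar $H$, noting that it decouples across coordinates, and setting the derivatives to zero is exactly the intended argument. One small remark: the KL expression quoted in the main text of Theorem~\ref{thm:three} has a sign typo relative to the derivation in Appendix~\ref{sec:sgfs} (which reads $+\frac{N}{4}\mathrm{Tr}\bigl(H(\epsilon BB^\top + EE^\top)\bigr) - \frac{1}{2}\log|H| + \mathrm{const.}$); your stationarity equation and final formulas are unaffected, but your closing convexity check tacitly uses the appendix signs, under which the restricted objective is indeed strictly convex and the critical point is the unique minimizer.
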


Note that we have not made any assumptions about the noise covariance $E$.
We can adjust it in favor of a more stable algorithm.
For example, in the interests of stability we might want to 
set a maximum step size
$h^{max}$, so that $H_{kk}\leq h^{max}$ for all $k$. We can 
adjust $E$ such that $H_{kk} \equiv h^{max}$ in Eq.~\ref{eq:fisher_H} becomes independent of $k$.
Solving for $E$ yields
$EE^\top_{kk}  =  \textstyle\frac{2}{h^{max}N} -  \epsilon BB^\top_{kk}$.

Hence, to keep the learning rates bounded in favor of
stability, one can inject noise in dimensions where the variance of
the gradient is too small.  This guideline is opposite to the
advice of \citet{ahn2012bayesian} to choose $B$ proportional to $E$,
but follows naturally from the variational analysis.

An additional benefit of SGFS over simple constant SGD is that the sum
of gradient noise and Gaussian noise will always look ``more
Gaussian'' than the gradient noise on its own. An extreme case is when
the gradient covariance $BB^\top$ is not full rank; in this situation
injecting full-rank Gaussian noise could prevent degenerate behavior.

\section{A Bayesian View on Iterate Averaging}
\label{sec:iterate}
We now apply our continuous-time analysis to the technique of iterate averaging \citep{polyak1992acceleration}.
Iterate averaging was derived as an optimization algorithm, and we analyze this algorithm in Section~\ref{sec:iterate_opt}
from this perspective. In Section~\ref{sec:iterate_bayes} we show that iterate averaging can also be used as a Bayesian sampling algorithm.

\subsection{Iterate Averaging for Optimization}
\label{sec:iterate_opt}

Iterate averaging further refines SGD's estimate of the optimal parameters
by averaging the iterates produced by a series of SGD steps.
\citet{polyak1992acceleration} proved the remarkable result that
averaged SGD achieves the best possible convergence rate among
stochastic gradient algorithms\footnote{To be more precise, averaged
SGD is optimal among methods that only have access to a stochastic
gradient oracle---if more is known about the source of the noise
then sometimes better rates are possible \citep[e.g.,][]{johnson2013accelerating,
defazio2014saga}.},
including those that use second-order information such as Hessians.
This implies that the convergence speed of iterate averaging cannot be improved
when premultiplying the stochastic gradient with any preconditioning matrix.
We use stochastic calculus 
to show that the stationary distribution of iterate averaging is the same for any constant
preconditioner. Based on slightly stronger-than-usual assumptions, we give a short proof on why this result holds.

To eliminate asymptotic biases, iterate averaging usually requires a slowly decreasing learning rate. We consider a
simplified version with a constant rate, also analyzed in~\citep{zhang2004solving,nemirovski2009robust}. 

\parhead{Algorithm.} Before we begin iterate averaging, we assume that
we have run ordinary constant SGD for long enough that it has reached
its stationary distribution and forgotten its intialization. In this
scenario, we use iterate averaging to ``polish'' the results obtained
by SGD.

Iterate averaging estimates the location of the minimum of $\mathcal{L}$ using a
sequence of stochastic gradients $\hat{g}_S$, and then computes an average of the iterates in an online manner,
\be
\theta_{t+1} & = & \theta_t - \epsilon \hat{g}_S(\theta_t),  \\
\hat{\mu}_{t+1} & = & \textstyle \frac{t}{t+1}\, \hat{\mu}_{t} +  \frac{1}{t+1}\, \theta_{t+1}. \nonumber 
\ee
After $T$ stochastic gradient steps and going over to continuous times, this average is
\begin{equation}
\begin{split}
\textstyle
\hat\mu \approx \textstyle\frac{1}{T}\int_0^{T} \theta(t) dt \equiv \hat\mu'.
\end{split}
\label{eq:estimatorcovariance}
\end{equation}
The average $\hat\mu$ and its approximation $\hat{\mu}'$ are random variables whose expected value is the
minimum of the objective. (Again, this assumes $\theta(0)$ is drawn from the SGD process's stationary distribution.)

The accuracy of this estimator $\hat{\mu}'$ after a fixed number of iterations $T$ is characterized
by its covariance matrix $D$. Using stochastic calculus, we can compute $D$ from the
autocorrelation matrix of the stationary distribution of the OU process, shown in Appendix~\ref{sec:iterate_proof}.
We prove that
\begin{equation}
\begin{split}
\textstyle
D \; \equiv\;  \E\left[ \hat{\mu}' \hat{\mu}'^\top \right] \; \approx
\;  \textstyle\frac{1}{\epsilon T} \left(\Sigma (A^{-1})^\top  +
  A^{-1}\Sigma\right).
\end{split}
\label{eq:iterate_autocorr}
\end{equation}
This approximation ignores terms of order $1/(T \epsilon)^2$. This term is small since we assume
that the number of iterations $T$ is much larger than the inverse learning
rate $1/\epsilon$. The covariance shrinks linearly over
iterations as we expect from~\cite{polyak1992acceleration}. We use
Eq.~\ref{eq:stationaryvariance} to derive 
\begin{align}
D \approx  \textstyle\frac{1}{TS}A^{-1}BB^\top (A^{-1})^\top. \nonumber
\end{align}

Note that this covariance depends only on the number of iterations $T$
times the minibatch size $S$, not on the step size $\epsilon$. Since
$TS$ is the total number of examples that are processed, this means
that this iterate averaging scheme's efficiency does not depend
on either the minibatch size or the step size, proven first in~\citep{polyak1992acceleration}.

We can make a slightly stronger statement. If we precondition the
stochastic gradients with a positive-definite matrix $H$ (for example,
the inverse of the Hessian evaluated at the minimum), it turns out
that the covariance of the estimator remains unchanged. The
resulting Ornstein-Uhlenbeck process is
$
d\theta = -HA\theta(t)dt + \invS H B\, dW(t).
$ 
The resulting stationary covariance $D'$ of preconditioned iterate averaging is the same as above:
\begin{equation}
\begin{split}
D' &\approx \textstyle\frac{1}{T}(HA)^{-1}(\invS HB) (\invS HB)^\top ((HA)^\top)^{-1}
\\ &= \textstyle\frac{1}{TS}A^{-1}H^{-1}HBB^\top H H^{-1}A^{-1}
\\ &= \textstyle\frac{1}{TS}A^{-1}BB^\top A^{-1}.
\end{split}
\end{equation}
Both the stationary distribution and the autocorrelation
matrix change as a result of the preconditioning, and these changes
exactly cancel each other out. 

This optimality of iterate averaging was first derived by~\cite{polyak1992acceleration},
using quasi-martingales. Our derivation is based on stronger assumptions, but is shorter.

\subsection{Finite-Window Iterate Averaging for Posterior Sampling}
\label{sec:iterate_bayes}
Above, we used the continuous-time formalism to quickly rederive known
results about the optimality of iterate averaging as an optimization
algorithm.
Next, as in Section~\ref{sec:consequences},
we will analyze iterate averaging as an algorithm for approximate posterior inference.

We will show that (under some optimistic assumptions),
iterate averaging requires exactly $N$ gradient calls to generate
one sample drawn from the exact posterior distribution,
where $N$ is the number of observations. That is,
there exist conditions under which iterate averaging generates one true posterior
sample per pass over the data.
This result is both exciting and discouraging;
it implies that, since our assumptions are all optimistic and
iterate averaging is known to saturate the Cram\'er-Rao bound,
\emph{no} black-box stochastic-gradient MCMC algorithm can
generate samples in time sublinear in the number of data points.

In addition to Assumptions 1--4 of Section~\ref{sec:formalism}, we need
an additional assumption for our theoretical considerations to hold:

\begin{assumption}
Assume that the sample size $N$ is large enough that the Bernstein-von Mises
theorem~\citep{le1986asymptotic} applies (hence the posterior is Gaussian).
Also assume that the observed dataset was drawn from the model
$p(y~|\theta)$ with parameter $\theta=0$.
Then $A=BB^\top$, that is, the Fisher information matrix equals the Hessian.
\end{assumption}
This simplifies
equation \ref{eq:stationaryvariance}:
\begin{align}
    A\Sigma + \Sigma A = {\textstyle \frac{\epsilon}{S}}BB^\top
    \stackrel{A5}{\Longrightarrow}  \Sigma = \frac{\epsilon}{2S}{\bf I}.
    \label{eq:bvmstationary} 
\end{align}
That is, the sampling distribution of SGD is isotropic.

\parhead{Stationary distribution.} We will consider the sampling
distribution of the \emph{average} of $T$ successive samples from the
stationary SGD process with step size $\epsilon$. Going to the
continuous-time OU formalism, we show in
Appendix~\ref{sec:iterate_proof} that the stationary covariance of the
iterate averaging estimator defined in Eq.~\ref{eq:iterate_autocorr} is
\begin{align}
\begin{split}
  D  &= \frac{1}{ST} A^{-1} +
    \frac{1}{\epsilon ST^2} U \Lambda^{-2}(e^{-\epsilon T\Lambda} - {\bf I}) U^\top,
\end{split}
\label{eq:stationarycovariance_averaging}
\end{align}
where $U$ is orthonormal, $\Lambda$ is diagonal, and $U\Lambda U^\top=A$
is the eigendecomposition of the Hessian $A$.
We have previously assumed that the posterior has covariance
$\frac{1}{N}A^{-1}$.
Thus, to leading order in the ratio $1/(\epsilon T \Lambda)$, the stationary
distribution of fixed-window iterate averaging is a scaled version of
the posterior.

If we choose $T=N/S$, so that we average the iterates of a single pass
through the dataset, then the iterate-averaging sampling
distribution will have approximately the same covariance as the posterior,
\begin{equation}
  \begin{split}
    D^\star &= \frac{1}{N} A^{-1} +
    \frac{S}{\epsilon}\frac{1}{N^2} U \Lambda^{-2}(e^{-\frac{\epsilon}{S}N\Lambda} - {\bf I}) U^\top
\\&= \frac{1}{N} U\Lambda^{-1}\left({\bf I} +
    \frac{S}{\epsilon}\frac{1}{N} \Lambda^{-1}(e^{-\frac{\epsilon}{S}N\Lambda} - {\bf I})\right) U^\top.
    \label{eq:polyakposterior}    
\end{split}
\end{equation}
$D^\star$ and $A^{-1}$ have identical eigenvectors, and their
eigenvalues differ by a factor that goes to zero as
$\frac{\epsilon}{S}$ becomes large. Conversely, as
$\frac{\epsilon}{S}$ approaches zero, all of these eigenvalues
approach $\frac{\epsilon}{2S}$ as in Eq.~\ref{eq:bvmstationary}. (This
can be shown by taking a second-order Maclaurin approximation
of $e^{-\frac{\epsilon N}{S}\Lambda}$.)

Our analysis gives rise to the Iterate Averaging Stochastic Gradient sampler (IASG),
described in Algorithm~\ref{alg:IASG}. We now investigate its approximation error
and efficiency.

\begin{algorithm}[t]
{\bf input:} averaging window $T=N/S$, number of samples $M$, input for SGD.\\
 \For{$t=1$ to {$M*T$}}{
  $\theta_{t} = \theta_{t-1} - \epsilon \, \hat{g}_S(\theta_{t-1})$; // perform an SGD step;\\
  \If{$t \, {\rm mod} \,T = 0$}{
  $\mu_{t/T} = \frac{1}{T}\sum_{t' = 0}^{T-1} \theta_{t-t'}$; // average the $T$ most recent iterates
  }
  }
  {\bf output:} return samples \, $\{\mu_1, \dots, \mu_M\}$.
 \caption{The Iterate Averaging Stochastic Gradient sampler (IASG)}
 \label{alg:IASG}
\end{algorithm} 

\parhead{Approximation error, step size, and minibatch size.}  We now
focus on the correction terms that lead to deviations between the
iterate averaging estimator's covariance $D^*$ and the posterior
covariance $\textstyle{\frac{1}{N}A^{-1}}$.

The analysis above tells us that we can ignore these correction terms
if we choose a large enough $\frac{\epsilon}{S}$. But the analysis in
previous chapters assumes that $\frac{\epsilon}{S}$ is small enough
that assumptions 1--4 hold. These considerations are in tension.

When is $\frac{\epsilon}{S}$ ``large enough''?
Eq.~\ref{eq:polyakposterior} shows that the relative error is largest
in the direction of the smallest eigenvalue $\lambdamin \equiv \min_k
\Lambda_{kk}$ of $A$ (corresponding to the least-constrained direction
in the posterior).  We will focus our analysis on the relative error
in this direction, which is given by
\begin{align*}
\begin{split}
\textstyle
\errmax \equiv
\frac{S}{\epsilon}\frac{1}{N\lambdamin}
(e^{-\frac{S}{\epsilon}N\lambdamin}-1).
\end{split}
\end{align*}
We are given $N$ and $\lambdamin$, but we can control
$\frac{\epsilon}{S}$. To make $\errmax$ small, we must therefore
choose $\frac{\epsilon}{S} > \frac{c}{N\lambdamin}$ for some
constant $c$.  So larger datasets and lower-variance posteriors let
us use smaller stepsizes.

It is hard to say in general how small $\frac{\epsilon}{S}$ needs to
be to satisfy assumptions 1--4. But if assumption 4 is satisfied
(i.e., the cost is approximately quadratic), then assumption 3 (no
discretization error) cannot hold if $\epsilon >
\frac{2}{\lambdamax}$. This is the step size at which the
discretized noise-free gradient descent process becomes unstable for
quadratic costs.  (We define $\lambdamax\equiv\max_k\Lambda_{kk}$
analogous to $\lambdamin$.)

Combining this observation with the analysis above, we see that this
iterate averaging scheme will not work unless
\begin{align*}
\begin{split}
\frac{2}{S\lambdamax} > \frac{\epsilon}{S} > \frac{c}{N\lambdamin}
\Rightarrow \frac{2}{c} > \frac{S}{N}\frac{\lambdamax}{\lambdamin}.
\end{split}
\end{align*}
That is, we need the dataset size $N$ to be large enough relative to
the condition number $\frac{\lambdamax}{\lambdamin}$ of the Hessian
$A$ if this simple iterate-averaging scheme is to generate good
posterior samples. If the condition number is large relative to $N$,
then it may be necessary to replace the scalar step size $\epsilon$
with a preconditioning matrix $H\approx A^{-1}$ to reduce the
effective condition number of the Hessian $A$.

\parhead{Efficiency.} 
Next, we theoreticaly analyze the efficiency with which iterate averaging can draw
samples from the posterior, and compare this method to other approaches. 
We assume that the cost of analyzing a minibatch is proportional to
$S$. We have shown above that we need to average over $T=N/S$ samples of SGD to create a
sample of iterate averaged SGD. Since this averaging induces a strong autocorrelation, 
we can only use every $T$th sample of the chain of averaged iterates.
Furthermore, every sample of SGD incurs a cost at least proportional to $D$ where $D$ is the dimensionality of $\theta$. 
This means that we cannot generate an independent sample from the posterior in
less than $O(S*T*D) = O(ND)$ time; we must analyze at least $N$ observations
per posterior sample.

We compare this result with the more classical
strategy of estimating the posterior mode via Newton's method (which
has an $O(ND^2 + D^3)$ cost) and then estimating the posterior
covariance by computing the inverse-Hessian at the mode, again
incurring an $O(ND^2 + D^3)$ cost. By contrast, getting an unbiased
full-rank estimate of the covariance using MCMC requires generating at
least $D$ samples, which again costs $O(ND^2)$. If $N > D$, then this
is within a constant cost of the classical approach.

However, it is conceivable that Polyak averaging (Section~\ref{sec:iterate_bayes})
could be used to estimate the first few principal components of the
posterior relatively quickly (i.e., in $O(ND)$ time). This corresponds
to finding the smallest principal components of $A$, which cannot be
done efficiently in general. A related question is investigated
experimentally in Section~\ref{sec:iter_exp}.

The analysis above implies an upper bound on the efficiency of
stochastic-gradient MCMC (SGMCMC) methods. The argument is this:
given that assumptions 1--5 hold, suppose that there exists an SGMCMC
method that, for large $N$, is able to generate effectively
independent posterior samples using $d<O(N)$ operations. Then, if we
wanted to estimate the posterior mode, we could simply average some
large number $M$ of those samples to obtain an estimator whose
covariance would be $\frac{1}{M}A^{-1}$. This approach would require
$dM$ operations, whereas iterate averaging would require $O(MN)>dM$
operations to obtain an estimator with the same covariance.  But this
contradicts the result of \citet{polyak1992acceleration} that
\emph{no} stochastic-gradient-oracle algorithm can outperform iterate
averaging. Thus, the optimality of iterate averaging as an
optimization algorithm, taken with assumptions 1--5, implies that
\emph{no SGMCMC algorithm can generate posterior samples in sublinear
  time}\footnote{At least, not without exploiting additional knowledge about
the source of gradient noise as do methods like SVRG and SAGA
\citep{johnson2013accelerating, defazio2014saga}.}.

This argument relies on assumptions 1--5 being true, but one can
easily construct scenarios in which they are violated. However, these
assumptions are all optimistic; there seems (to us) little reason to
think that problems that violate assumptions 1--5 will be easier than
those that do not.

\begin{table}
  \centering
  \begin{tabular}{|l||c|c|c|}
    \hline
    Method                   & Wine    & Skin  & Protein\\ \hline\hline
    constant SGD       & 18.7                & 0.471   & 1000.9 \\ \hline
    constant SGD-d   & 14.0                & 0.921   & 678.4 \\ \hline
    constant SGD-f    &  0.7                 & 0.005   & 1.8 \\ \hline
    SGLD~{\small \citep{welling2011bayesian}  }                  &  2.9                 &0.905   & 4.5 \\ \hline
    SGFS-d~{\small\citep{ahn2012bayesian}} &  12.8                 &0.864  & 597.4  \\ \hline
    SGFS-f~{\small\citep{ahn2012bayesian}}    &  0.8                 &0.005   & 1.3  \\ \hline
    BBVI{\small~\citep{kucukelbir2015automatic}}  &  44.7                & 5.74       & 478.1\\ \hline
  \end{tabular}
  \caption{ KL divergences between the posterior and stationary sampling
    distributions applied to the data sets discussed in Section~\ref{sec:toy_data}.
    To estimate the KL divergence, we fitted a multivariate Gaussian
    to the iterates of our sampling algorithms and used a Laplace approximation for the posterior. 
    We compared constant SGD without preconditioning and with diagonal (-d) and full rank (-f) preconditioning
    against Stochastic Gradient Langevin Dynamics and Stochastic Gradient Fisher Scoring (SGFS)
    with diagonal (-d) and full rank (-f) preconditioning, and BBVI.
  }
  \label{table:kldivergences}
\end{table}

\section{Experiments}

We test our theoretical assumptions from Section~\ref{sec:formalism}
and find good experimental evidence that they are reasonable in some
settings.  We also investigate iterate averaging and show that the
assumptions outlined in~\ref{sec:iterate_bayes} result in samples from
a close approximation to the posterior.  We also compare against other
approximate inference algorithms, including
SGLD~\citep{welling2011bayesian}, NUTS~\citep{hoffman2014no}, and black-box
variational inference (BBVI) using Gaussian reparametrization
gradients~\citep{kucukelbir2015automatic}. In
Section~\ref{sec:hyper_exp} we show that constant SGD lets us optimize
hyperparameters in a Bayesian model.

\label{sec:experiments}

\begin{figure} \centering
\includegraphics[width=.9\linewidth]{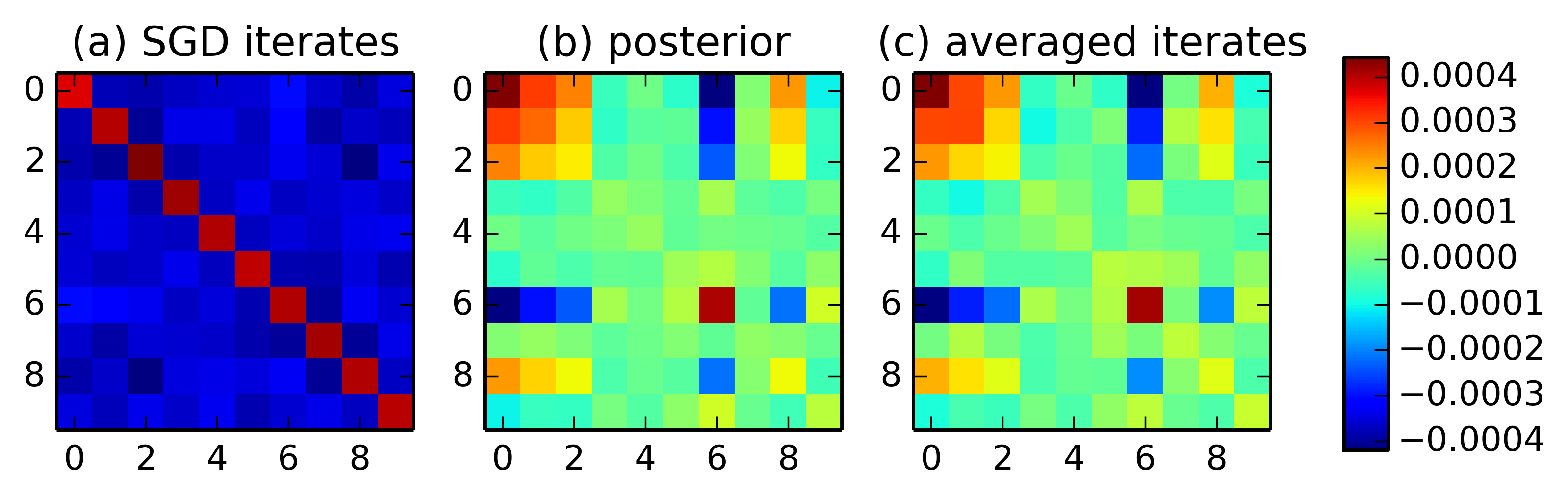}
\caption{Iterate averaging on linear regression, where we generated
artificial data as generated from the model.  (a) shows the empirical
covariance of the iterates of SGD, whereas (c) shows the averaged
iterates with optimally chosen time window. The resulting covariance
strongly resembles the true posterior covariance in (b). This shows
that iterate averaging may result in posterior sampling. }
\label{fig:iterate_averaging_panel}
\end{figure}

\subsection{Confirming the Stationary Distribution's Covariance}
\label{sec:toy_data}
In this section, we confirm empirically that the stationary
distributions of SGD with KL-optimal constant learning rates are as
predicted by the Ornstein-Uhlenbeck process.

\parhead{Real-world data.} We first considered the following data sets.  
\begin{itemize} 
\item The \emph{Wine Quality Data Set}\footnote{P. Cortez, A. Cerdeira,
F. Almeida, T. Matos and J. Reis, 'Wine Quality Data Set', UCI Machine
Learning Repository.}, containing $N=4,898$ instances, $11$ features,
and one integer output variable (the wine rating).
\item  A data set of
\emph{Protein Tertiary Structure}\footnote{Prashant Singh Rana,
'Protein Tertiary Structure Data Set', UCI Machine Learning
Repository.}, containing $N=45,730$ instances, $8$ features and one
output variable. 
\item The \emph{Skin Segmentation Data
Set}\footnote{Rajen Bhatt, Abhinav Dhall, 'Skin Segmentation Dataset',
UCI Machine Learning Repository.}, containing $N=245,057$ instances,
$3$ features, and one binary output variable. 
\end{itemize}
We applied linear
regression on data sets $1$ and $2$ and applied logistic regression on
data set $3$.  We rescaled the feature to unit length and used a
mini-batch of size $S=100$, $S=100$ and $S=10000$ for the three data sets, respectively.  The
quadratic regularizer was $1$.  The constant learning rate was
adjusted according to Eq.~\ref{eq:version_1}.

Fig.~\ref{fig:regression} shows two-dimensional projections of samples
from the posterior (blue) and the stationary distribution (cyan),
where the directions were chosen two be the smallest and largest
principal component of the posterior.  Both distributions are
approximately Gaussian and centered around the maximum of the
posterior.  To check our theoretical assumptions, we compared the
covariance of the sampling distribution (yellow) against its predicted
value based on the Ornstein-Uhlenbeck process (red), where very good
agreement was found. Since the predicted covariance is based on
approximating SGD as a multivariate Ornstein-Uhlenbeck process, we
conclude that our modeling assumptions are satisfied to a very good
extent. Since the wine dataset is smaller than the
  skin segmentation data set, it has a broader posterior and therefore
  requires a larger learning rate to match it. For this reason, discretization effects play a bigger role and
  the stationary distribution of preconditioned SGD on wine does not exactly match the posterior.
The unprojected $11$-dimensional covariances on wine data are
also compared in Fig.~\ref{fig:covariance}.  The rightmost column of
Fig.~\ref{fig:regression} shows the sampling distributions of
black box variational inference (BBVI) using the reparametrization
trick~\citep{kucukelbir2015automatic}. Our results show that the
approximation to the posterior given by constant SGD is not worse than
the approximation given by BBVI.

We also computed KL divergences between the posterior and stationary
distributions of various algorithms: constant SGD with KL-optimal
learning rates and preconditioners, Stochastic Gradient Langevin
Dynamics, Stochastic Gradient Fisher Scoring (with and without
diagonal approximation) and BBVI. For SG Fisher Scoring, we set the
learning rate to $\epsilon^*$ of Eq.~\ref{eq:version_1}, while for
Langevin dynamics we chose the largest rate that yielded stable
results ($\epsilon = \{10^{-3}, 10^{-6}, 10^{-5}\}$ for data sets $1$,
$2$ and $3$, respectively). 
Table \ref{table:kldivergences} summarizes the results.
We found that constant SGD can compete in
approximating the posterior with the MCMC algorithms under
consideration.  This suggests that the most important factor is not
the artificial noise involved in scalable MCMC, but rather the
approximation of the preconditioning matrix.

\begin{figure}[h] \centering
\includegraphics[width=\linewidth]{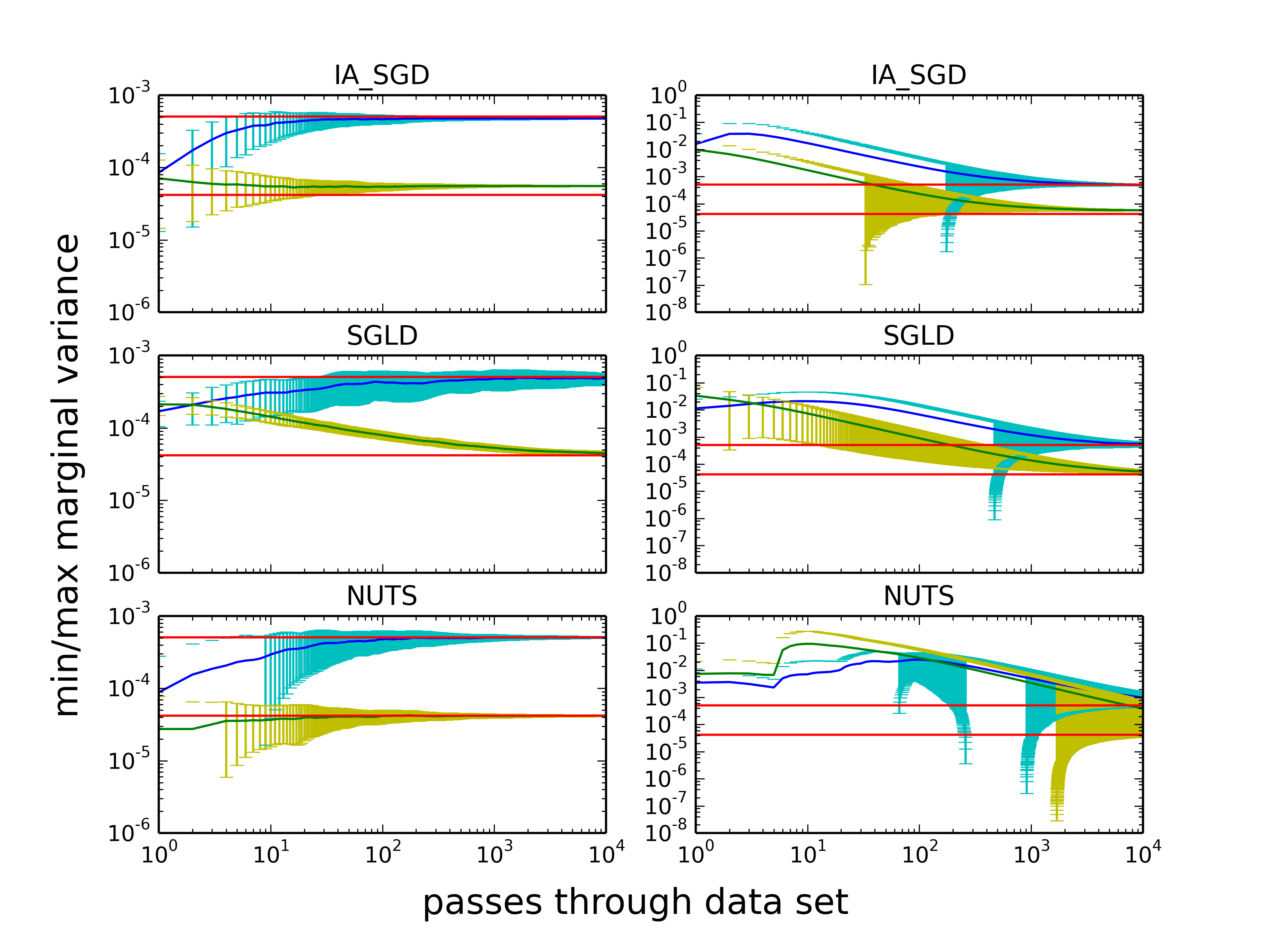}
\caption{Convergence speed comparison between IASG (top), SGLD
(middle), and NUTS (bottom) on linear regression. The plots show
minimal (yellow) and maximal (blue) posterior marginal variances,
respectively, as a function of iterations, measured in units of passes
through the data.  Error bars denote one standard deviation. Red solid
lines show the ground truth. Left plots were initialized in the
posterior maximum, whereas in the right column, we initialized
randomly.}
\label{fig:comparison_ia_sgld}
\end{figure}

\begin{figure}[h] \centering
\includegraphics[width=.7\linewidth]{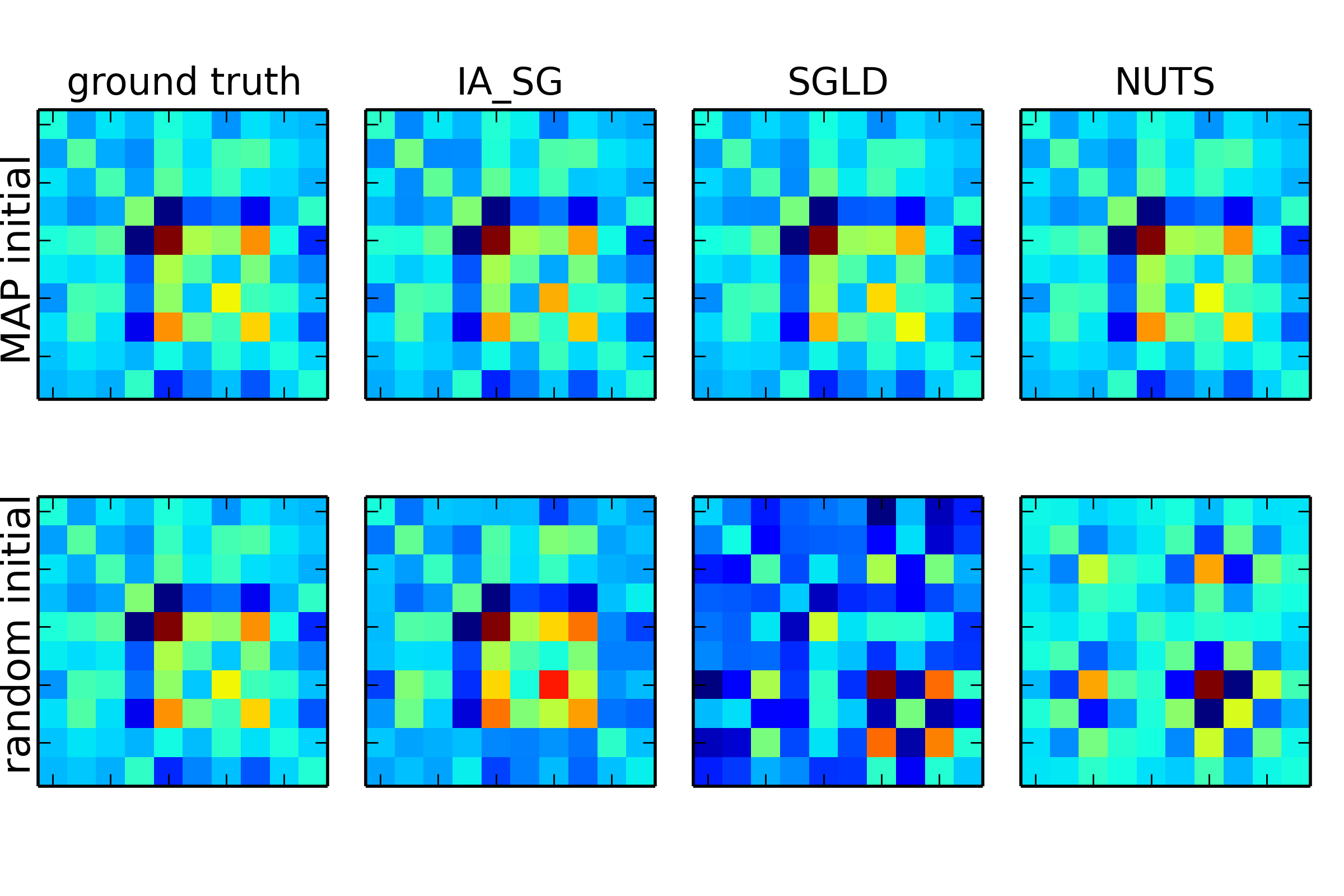}
\caption{Posterior covariances as estimated by different methods, see
also Fig.~\ref{fig:comparison_ia_sgld}. The top row shows results
where we initialized the samplers in the maximum posterior mode. The
bottom rows were initialized randomly.  For MAP
initialization, all samplers find a good estimate of the
posterior. When initializing randomly, IASG performs better than NUTS
and SGLD.}
\label{fig:comparison_ia_sgld_2}
\end{figure}

\subsection{Iterate Averaging as Approximate MCMC}
\label{sec:iter_exp}

In the following, we show that under the assumptions specified in
Section~\ref{sec:iterate_bayes}, iterate averaging with a constant
learning rate and fixed averaging window results in samples from the
posterior.

\parhead{Synthetic data.} In order to strictly satisfy the assumptions outlined in
Section~\ref{sec:iterate_bayes}, we generated artificial
data that came from the model. We chose a linear regression
model with a Gaussian prior with precision $\lambda=1$. We first
generated $N=10,000$ covariates by drawing them from a $D=10$
dimensional Gaussian with unit covariance. We drew the true
weight vector from the prior. We then generated the corresponding
response variables from the linear regression model.

In a first experiment, we confirmed that iterate averaging may result
in samples from the exact posterior, shown in Fig.~\ref{fig:iterate_averaging_panel}.
The left panel shows the empirical covariance matrix of the iterates of
SGD. The middle and right panel show the posterior covariance and
empirical covariance of the averaged iterates, respectively. As
predicted by our theory, there is a very strong resemblance which
demonstrates that constant-rate iterate averaging with the right rates
and averaging windows may actually result in samples from the
posterior. To generate this plot, we then ran constant SGD with a
constant learning rate $\epsilon=0.005$ for $10*D*N = 10^{6}$
iterations and a minibatch size $S=1$ and used an averaging window of
$N/S=10^4$, as predicted by the theory presented earlier in order to
achieve samples from the posterior.

Next, we analyzed the convergence speed of Iterate-Averaged
Stochastic Gradients (IASG, see Algorithm~\ref{alg:IASG}) compared to related
methods. Fig.~\ref{fig:comparison_ia_sgld} shows
these comparisons for three modern scalable MCMC algorithms: 
SGLD~\citep{welling2011bayesian} and NUTS~\citep{hoffman2014no}.
We investigated how quickly these algorithms could give us estimates of the posterior
covariances.  To better visualize the convergence behavior, we
focussed on diagonal entries of the posterior covariance, the 
marginal variances. As before, the data were generated
from the model such the theoretical assumptions
of~\ref{sec:iterate_bayes} applied. We then ran the samplers for up to
$10^3*D$ effective passes through the data. For IASG and SGLD we used
a minibatch size of $S=10$ and an averaging window of $N/S=1000$.
The constant learning rate of IASG  was $\epsilon=0.003$ and for SGLD we
decreased the learning rate according to the Robbins-Monro schedule of
$\epsilon_t = \textstyle{\frac{\epsilon_0}{\sqrt{1000+t}}}$ where we
found $\epsilon_0=10^{-3}$ to be optimal. NUTS automatically adjusts
its learning rate and uses non-stochastic gradients. 

The left column of Fig.~\ref{fig:comparison_ia_sgld} shows the
results of these experiments. We found the convergence speeds of the
samplers to be highly dependent on whether we optimized the samplers
in the maximum posterior mode (termed MAP-initialization: this partially eleminates the initial
bias and burn-in phase) or whether the samplers were initialized
randomly, as in a real-world application. Thus, we show both results:
while the the right column shows random initializations, the left one
shows MAP-initialization. The rows of
Fig.~\ref{fig:comparison_ia_sgld} show results of IASG (top), SGLD
(middle), and NUTS (bottom). Each entry shows the smallest and largest
marginal variance of the posterior over iterations, as estimated from
these methods, where we excluded the first 10 iterations to avoid large biases. 
We also give the standard deviations for these
estimates based on 100 independent Markov chains. The solid red lines show the ground truth of these marginal
variances. Fig.~\ref{fig:comparison_ia_sgld_2} shows additional results on the
same experiments, where we display the posterior estimates of the three
different samplers under the two different initializations. 

We found that in both initializations,
IASG can find a fast approximate solution to the
posterior. It uses stochastic gradients which gives it a competitive
advantage over NUTS (which uses full gradients) in particular in the
early search phase of the sampler. Langevin dynamics behaves similarly
at early iterations (it also employs stochastic gradients). 
However, compared to IASG, we see that the
Langevin algorithm has a much larger standard error when estmating the
posterior covariances. This is because it uses a decreasing
Robbins-Monro learning rate that slows down equilibration at long
times. In contrast, IASG uses a constant learning rate and therefore
converges fast. Note that in practice a large variance may be as bad as a large
bias, especially if posterior estimates are based on a single Markov
chain. This is also evident in Fig.~\ref{fig:comparison_ia_sgld_2},
which shows that for random initialization, both NUTS and SGLD
reveal less of the structure of the true posterior covariance compared
to IASG.

When initialized in the posterior maximum, we see that all algorithms
perform reasonably well (though SGLD's estimates are still highly variable
even after $10,000$ sweeps through the dataset).
IASG converges to a stable estimate much faster than SGLD or NUTS,
but produces slightly biased estimates of the smallest variance.

\subsection{Optimizing Hyperparameters}
\label{sec:hyper_exp}

We test the hypothesis of Section~\ref{sec:hyperparameter_theory},
namely that constant SGD as a variational algorithm gives rise to a
variational EM algorithm where we jointly optimize hyperparameters
using gradients while drawing samples form an approximate posterior.
To this end, we experimented with a
Bayesian multinomial logistic (a.k.a. softmax) regression model with
normal priors. The negative log-joint is
\begin{equation}
\begin{split}
\label{eq:logisticregression}
\mathcal{L}&\textstyle\equiv -\log p(y,\theta|x) = \frac{\lambda}{2}\sum_{d,k}\theta_{dk}^2 
- \frac{DK}{2}\log(\lambda) + \frac{DK}{2}\log2\pi
\\ &\textstyle + \sum_n \log\sum_k \exp\{\sum_d x_{nd}\theta_{dk}\}
- \sum_d x_{nd}\theta_{dy_n},
\end{split}
\end{equation}
where $n\in\{1,\ldots,N\}$ indexes examples, $d\in\{1,\ldots,D\}$
indexes features and $k\in\{1,\ldots,K\}$ indexes
classes. $x_n\in\mathbb{R}^D$ is the feature vector for the $n$th
example and $y_n\in\{1,\ldots,K\}$ is the class for that example.
Eq. \ref{eq:logisticregression} has the degenerate maximizer
$\lambda=\infty$, $\theta=0$, which has infinite posterior density which
we hope to avoid in our approach.

\parhead{Real-world data.} In all experiments, we applied this model to the MNIST dataset
($60,000$ training examples, $10,000$ test examples, $784$
features) and the cover type dataset 
($500,000$
training examples, $81,012$ testing examples, $54$ features).

Fig. \ref{fig:optimizedlambda} shows the validation loss achieved by
maximizing equation \ref{eq:logisticregression} over $\theta$ for
various values of $\lambda$. This gives rise to the continuous blue
curve. The value for constant SGD was obtained
using Eq.~\ref{eq:variationalEM}, hence using constant learning rates for
$\theta$ and decreasing learning rates for
$\lambda$. BBVI was carried out by
optimizing a variational lower bound in mean-field variational
inference,
and optimizing hyperparameters based on this lower bound.

The results suggest that BBVI and constant SGD yield similar results. 
Thus, constant SGD can be used as an inexpensive alternative to cross-validation or other
VEM methods for hyperparameter selection.

\begin{figure}
\centering
\includegraphics[width=0.5\linewidth]{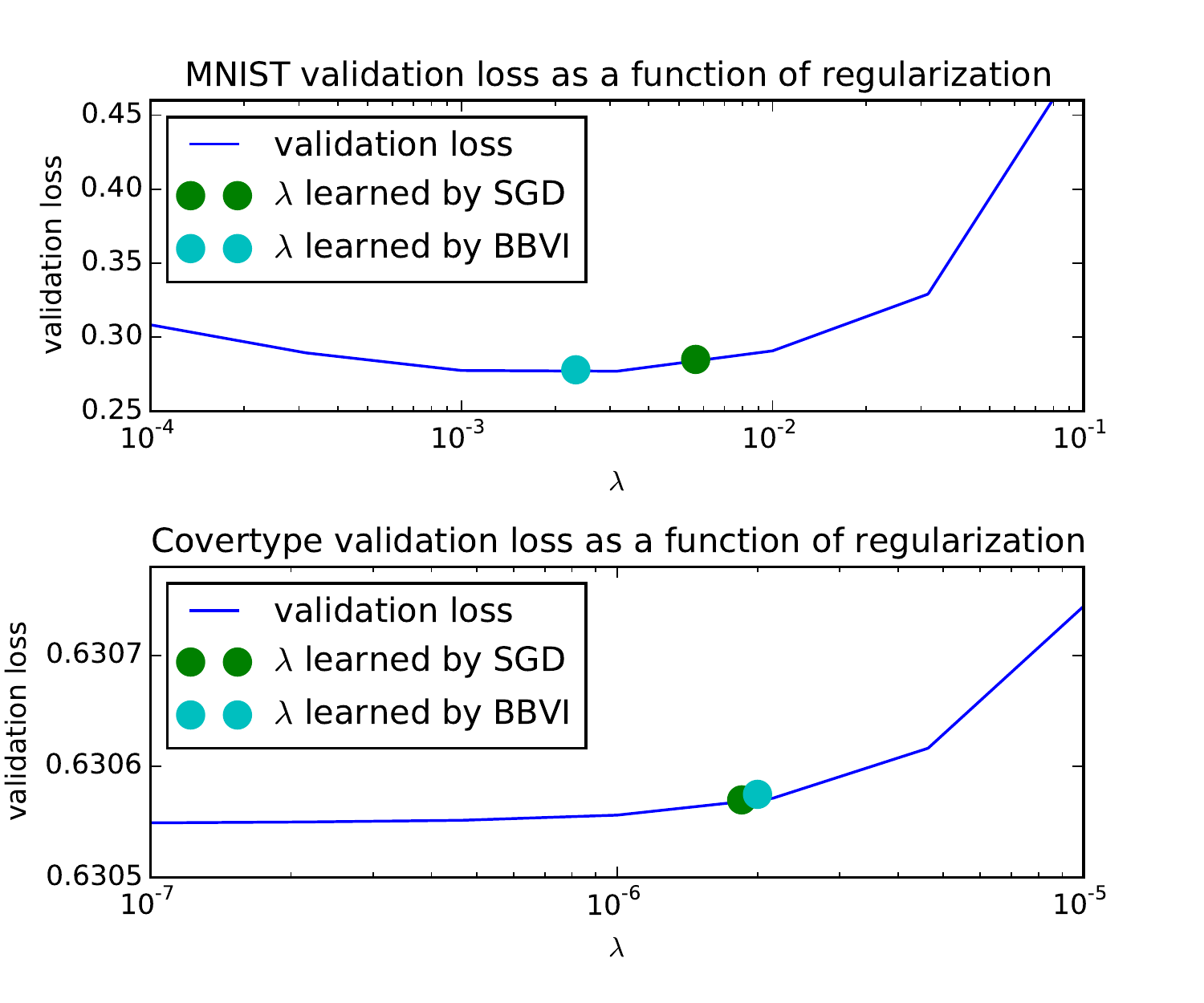}
\vskip-0.2in
\caption{Validation loss as a function of L2 regularization parameter
$\lambda$. Circles show the values of $\lambda$ that were automatically
selected by SGD and BBVI.}
\label{fig:optimizedlambda}
\end{figure}

\section{Conclusions}

In this paper, we built on a stochastic process perspective of
stochastic gradient descent and various extensions to derive several
new results. Under specified assumptions, SGD is
approximated by a multivariate Ornstein-Uhlenbeck process, which
possesses an analytic solution. We computed the
stationary distribution of constant SGD and analyzed its
properties.

We analyzed SGD together with several extensions, such as momentum,
preconditioning, and iterate averaging. The shape of the stationary distribution
is controlled by the parameters of the algorithm such as the constant learning
rate, preconditioning matrix, or averaging period. We can thus tune these parameters
to minimize the Kullback-Leibler divergence between the stationary distribution
and a Bayesian posterior. This view uses these stochastic optimization algorithms
as approximate inference. We also analyzed stochastic
gradient Langevin dynamics and stochastic gradient Fisher
scoring and were able to analyze approximation errors for these algorithms. 

The Bayesian view on constant-rate SGD allows us to use this algorithm
as a new variational EM algorithm. We suggested and tested a double SGD
scheme which uses decreasing learning rates on the hyperparameters and
constant learning rates on the main parameters. We showed that this is
both easy to implement and prevents us from finding degenerate
solutions; it is a cheap alternative to cross-validation for many
complex models.

Last, our analysis suggests the many similarities between sampling and
optimization algorithms that can be explored using the stochastic
process perspective. A future direction might be to explore
similarities in the noise characteristics of black-box variational
inference algorithms and Langevin-type MCMC. Further exploring the use
of iterate averaging as a Bayesian algorithm is another interesting
avenue for further studies.

\acks{We would like to thank Yingzhen Li and Thomas Hofmann for their valuable feedback on our manuscript.}

\appendix

\section{Examples: Ornstein-Uhlenbeck Formalism}

Let us illustrate the  
Ornstein-Uhlenbeck formalism based on two simple examples. First, 
consider the following quadratic loss,
\be
\textstyle{{\cal L}(\theta)  =  -\textstyle\frac{1}{2N} \sum_{n=1}^N || x_n - \theta||^2.}
\ee
Let us define $\bar{x} = \textstyle\frac{1}{N}\sum_{n=1}^N x_n$ as the empirical mean of the data points.
The gradient is $g (\theta) = (\bar{x}-\theta)$, and the stochastic gradient is  $\hat{g}(\theta) =   \textstyle\frac{1}{S}\sum_{s=1}^S (x_s - \theta)$.
Because the gradient is linear in $x$, the noise covariance is just the covariance of the data:
$\Sigma_x \equiv C/S  =  \textstyle\frac{1}{S^2}{\mathbb E}\left[\textstyle \sum_{s,s'}(x_s - \bar{x})(x_{s'} - \bar{x})^\top \right]  =   \textstyle\frac{1}{S} {\mathbb E}[(x_n - \bar{x})(x_n - \bar{x})^\top ].$
We can shift the parameter $\theta  \rightarrow \theta + \bar{x}$, resulting in $\theta^* = 0$. Note that the Hessian $A \equiv {\bf I}$ is just unity.
According to Eq.~\ref{eq:OU-solu},
\be
\textstyle{q(\theta)     \; \propto \; \exp\left\{ -\textstyle\frac{S}{2\epsilon} (\theta-\bar{x})^\top \Sigma_x^{-1} (\theta-\bar{x}) \right\}. }\nonumber
\ee
as the resulting stationary distribution.
Next, consider linear regression, where we minimize
\be
 \textstyle{ {\cal L}(\theta) \; = \; -\textstyle\frac{1}{2N} \sum_n (y_n - x_n^\top \theta)^2.}
\ee
We can write the stochastic gradient as $\hat{g} = \hat{A}\theta - \hat{\mu}$, where 
$\hat{\mu} = \textstyle\frac{1}{S}\sum_s x_s y_s$ and $\hat{A} = \textstyle\frac{1}{S} \sum_s x_s x_s^\top$ are estimates based on a mini-batch of size $S$. 
The sampling noise covariance is $C(\theta) = \E[(\hat{g}-g)(\hat{g}-g)^\top] = \E[\hat{g}\hat{g}^\top] - gg^\top$, where
$ \E[\hat{g}\hat{g}^\top] = \E[(\hat{A}\theta - \hat{\mu})(\hat{A}\theta - \hat{\mu})^\top] $. We see that the noise covariance is quadratic, but unfortunately
it cannot be further simplified. 

Fig.~\ref{fig:regression} shows the objective function of linear regression (blue) and the
sampling distribution of stochastic gradient descent (yellow) on simulated data. We see that
both distributions do not coincide, because the sampling distribution is also affected by the
noise covariance.

\section{Stationary Covariance}
\label{sec:covariance}

The Ornstein-Uhlenbeck process has an analytic solution in terms of the stochastic integral (Gardiner et al., 1985), 
 \be
\label{eq:OU-solu}
\theta(t) = \exp(-At) \theta(0) + \sqrt{\textstyle\frac{\epsilon}{S}}\int_0^t \exp[-A(t-t')] B dW(t')
 \ee

Following Gardiner's book and using $A = A^\top$, we derive an algebraic relation
for the stationary covariance of the multivariate Ornstein-Uhlenbeck process.
Define $\Sigma = \E[\theta(t)\theta(t)^\top]$. Using the formal solution for $\theta(t)$ given in the main paper, we find
\begin{align}
A \Sigma + \Sigma A  &={\textstyle \frac{\epsilon}{S} } \int_{-\infty}^t A \exp[-A(t-t')] BB^\top  \exp[-A(t-t')] dt' \n
					 &+{\textstyle \frac{\epsilon}{S} } \int_{-\infty}^t \exp[-A(t-t')] BB^\top  \exp[-A(t-t')] dt' A \n
					&= {\textstyle \frac{\epsilon}{S} } \int_{-\infty}^t  \frac{d}{dt'} \left(  \exp[-A(t-t')] BB^\top  \exp[-A(t-t')]  \right) \n
					&={\textstyle \frac{\epsilon}{S} }  BB^\top. \nonumber
\end{align}
We used that the lower limit of the integral vanishes by the  positivity of the eigenvalues of $A$.

\section{Reverse KL Divergence Setup}
\label{sec:app_reverseKL}

  It is interesting to also consider the case of trying to minimize the reverse KL divergence, i.e. $KL(f||q)$ instead of $KL(q||f)$.
  One might assume that this is possible since both the variational distribution and the posterior are assumed to be Gaussian.
  This turns out to lead only to a feasible algorithm in the special case where the Hessian in the optimum $A$ and the stationary covariance $\Sigma$ commute.
  In more detail, the $KL$-divergence between the posterior and the stationary distribution is (up to constants):
  \begin{align}
    KL(f||q) & = \E_f[\log f] -\E_f \left[\log q\right] \\
    & = \half \E_f \left[\theta^\top \Sigma^{-1} \theta \right] + \half \log |\Sigma| + \mathrm{const.} \n
    & = \textstyle{\frac{1}{2N}}{\rm Tr} (A^{-1} \Sigma^{-1})+ \half \log |\Sigma| + \mathrm{const.} \nonumber
  \end{align}
  While we were able to derive this divergence, it turns out that we cannot in general eliminate its dependence in the stationary
  covariance and re-express it in terms of $BB^\top$, using Eq.~\ref{eq:stationaryvariance}. However, if $A$ and $\Sigma$ commute, we can proceed as follows:
  \begin{align}
    {\rm Tr}(A^{-1} \Sigma^{-1}) & = {\rm Tr}( (\Sigma A)^{-1}) \\
    & \stackrel{A\Sigma = \Sigma A}{=} 2 {\rm Tr}( (\Sigma A + A \Sigma)^{-1}) \n
    & \stackrel{Eq.~\ref{eq:stationaryvariance}}{=} \textstyle{\frac{2S}{\epsilon}} {\rm Tr}((BB^\top)^{-1}) \nonumber
  \end{align}
  Following the logic of Theorems~\ref{thm:one} and \ref{thm:two}, we find the following result for the optimal learning rate:
  \begin{align}
    \epsilon^{*} = \frac{2S}{N D} {\rm Tr} ((BB^\top)^{-1}).\label{eq:reverseKL}
  \end{align}
Interestingly, when comparing Eq.~\ref{eq:version_1} with Eq.~\ref{eq:reverseKL}, we find that the inverse of the trace of the noise
covariance gets replaced by the trace of the inverse noise covariance. While $KL(q||f)$ suggests to choose the learning rate
inversely proportional to the largest Eigenvalue of the noise, $KL(f||q)$ thus suggests to choose the learning rate proportional to
the inverse of the smallest Eigenvalue of the noise. Both approaches have thus a different emphasis on how to fit the posterior mode, in
a similar fashion as variational inference and expectation propagation. Note, however, that $A$ and $\Sigma$ rarely commute in practice,
and thus $KL(q||f)$ is the only viable option.

\section{SGD With Momentum}
\label{sec:app_momentum}

Here, we give more details on the deviations of the results on SGD with momentum.
In order to compute the stationary distribution of Eq.~\ref{eq:momentum_continuous}, we need to solve the equations for the second moments:
\be
d\E[\theta \theta^\top] & = & \E[d\theta \theta^\top + \theta d\theta^\top] \n
				& = & (\E[\v\theta^\top] + \E[\theta \v^\top]) dt, \label{eq:moments_1} \\
d\E[\theta \v^\top] & = &  \E[d\theta \v^\top + \theta d\v^\top] \n
			& = & \E[\v\v^\top]dt  - \mu \E[\theta\v^\top]dt - \epsilon \E[\theta \theta^\top] A dt, \label{eq:moments_2}\\
d\E[ \v \theta^\top] & = &  \E[d\v \theta^\top + \v d\theta^\top] \n
			& = & \E[\v\v^\top]dt  - \mu \E[\v \theta^\top]dt - \epsilon A \E[\theta \theta^\top] dt, \label{eq:moments_3}\\
d\E[\v \v^\top] & = &  \E[d\v \v^\top + \v d\v^\top]  + \E[d\v d\v^\top] \n
		 & = & -2 \mu \E[\v\v^\top]dt - \epsilon A \E[\theta \v^\top] dt - \epsilon \E[\v \theta^\top] A dt + {\textstyle \frac{\epsilon^2}{S}}BB^\top dt.
\label{eq:moments_4}
\ee
In the last equation we used the fact that according to Ito's rule, there is an additional non-vanishing contribution due to the noise,
$ \E[d\v d\v^\top] = {\textstyle \frac{\epsilon^2}{S}} \E[B\, dW dW^\top \,B^\top ] = {\textstyle \frac{\epsilon^2}{S}} BB^\top dt$. This contribution does
not exist for the other correlators; for more details see e.g.~\citep{gardiner1985handbook}.

Since we are  looking for a stationary solution, we set the left hand sides of all equations to zero. 
Eq.~\ref{eq:moments_1} implies that $\E[\v\theta^\top] + \E[\theta \v^\top] = 0$, hence the cross-correlation between momentum and position
is anti-symmetric in the stationary state. We can thus add Eqs.~\ref{eq:moments_2}~and~\ref{eq:moments_3} to find 
$0 = d\E[\v\theta^\top + \theta\v^\top] =  2 \E[\v\v^\top]dt  - \epsilon A \E[\theta \theta^\top] dt - \epsilon \E[\theta \theta^\top] A dt$.
Combining this with Eq.~\ref{eq:moments_4} yields
\begin{align*}
\E[\v\v^\top] & =   \textstyle{\frac{\epsilon}{2}} \E[\theta \theta^\top] A +  \textstyle{\frac{\epsilon}{2}}  A \E[\theta \theta^\top] \\
 \mu \E[\v\v^\top] & =  \textstyle{\frac{\epsilon^2}{2S}} BB^\top - \underbrace{\half \epsilon (A \E[\theta \v^\top] + \E[ \v \theta^\top] A)}_{= 0}.
\end{align*}
Last, we show that the underbraced term is zero, which gives
Eq.~\ref{eq:fdt} in the main paper. Denote $\xi =
\E[\theta \v^\top] =-\xi^\top$ which is antisymmetric due to
Eq.~\ref{eq:moments_1}. First of all, $A\xi + \xi^\top A$ is
obviously symmetric. It is simultaneously antisymmetric due to the
following calculation:  $\sum_k A_{ik} \xi_{kj} + \sum_k
\xi^\top_{ik}A_{kj}^\top =  \sum_k A_{ik} \xi_{kj} -  \sum_k
\xi_{ki}A_{jk} =  \sum_k A_{ik} \xi_{kj} -  \sum_k
A_{jk}\xi_{ki}$. This term swaps the sign as $i$ and $j$ are
interchanged. Being both symmetric and antisymmetric, it is zero. 

\section{Stochastic Gradient Fisher Scoring}
\label{sec:sgfs}
We start from the Ornstein-Uhlenbeck process with minibatch size $S=1$,
\be
d \theta(t) & = & -\epsilon HA \theta(t) dt + H \left[ \epsilon B dW(t) + \sqrt{\epsilon} E dV(t)\right] \n
&\Leftrightarrow& \n
d \theta(t) & = &  -A' \theta(t) dt + B' dW(t), \nonumber
\ee
where we define $A'\equiv \epsilon HA$ and $B'\equiv H\sqrt{\epsilon^2BB^\top + \epsilon EE^\top}$,
and use the fact that the dynamics of $C dW(t) + D dV(t)$ are equivalent to
$\sqrt{CC^\top + DD^\top}dW(t)$. Here we are using matrix square roots, so
$B'B'^\top = \epsilon H(\epsilon BB^\top +  EE^\top)H$.

As derived in the paper, the variational bound is (up to a constant)
\be
KL \stackrel{\mathrm{c}}{=} \frac{N}{2}{\rm Tr}(A \Sigma) - \half \log (|\Sigma|). \nonumber
\ee
To evaluate it, the task is to remove the unknown covariance $\Sigma$ from the bound. To this end, as before, we use the identity 
for the stationary covariance $A'\Sigma + \Sigma A'^\top = B'B'^\top$.
The criterion for the stationary covariance is equivalent to
\be
HA\Sigma + \Sigma A H &=& \epsilon H BB^\top H + HEE^\top H. \nonumber
\ee

We can further simplify this expression as follows:
\be
A\Sigma + H^{-1} \Sigma A H & = &  \epsilon BB^\top H + EE^\top H \n
\Rightarrow {\rm Tr}(A\Sigma) & = & \frac{1}{2}{\rm Tr}(H( \epsilon BB^\top + EE^\top)). \nonumber
\ee
As above, we can also reparameterize the covariance as $\Sigma = TH$, so that
$T$ does not depend on $H$:
\begin{equation}
\begin{split}
HA\Sigma + \Sigma A H &= \epsilon H BB^\top H + HEE^\top H \n
A\Sigma H^{-1} + H^{-1}\Sigma A &= \epsilon BB^\top + EE^\top \n
AT + T^\top A &= \epsilon BB^\top + EE^\top.
\end{split}
\end{equation}
The KL divergence is therefore
\be
KL & = & \frac{N}{2}{\rm Tr} (A\Sigma) - \frac{D}{2} - \frac{1}{2}\log(N|A|) - \frac{1}{2}\log|\Sigma| \n
& = & \frac{N}{4}{\rm Tr} (H( \epsilon BB^\top + EE^\top)) - \frac{D}{2} - \frac{1}{2}\log(N|A|) - \frac{1}{2}\log|T| - \frac{1}{2}\log|H|,
\ee
which is the result we give in the main text. 

\section{Square Root Preconditioning}
\label{sec:sqrt}

We analyze the case where we precondition with a matrix that
is proportional to the square root of the diagonal entries of the
noise covariance.

We define
\be
G &=& \sqrt{{\rm diag}(BB^\top)} \nonumber
\ee
 as the diagonal matrix that contains square roots of the diagonal elements of the noise covariance.
We use an additional scalar learning rate $\epsilon$ .

\begin{theorem}[Taking square roots] 
Consider SGD preconditioned with $G^{-1}$ as defined above. 
Under the previous assumptions,  the constant learning rate which minimizes
KL divergence between the stationary distribution of this process and the posterior is
\be
\label{eq:version_3}
\epsilon^* & = &\textstyle  \frac{2DS}{N{\rm Tr}(BB^\top G^{-1})}.
\ee
\end{theorem}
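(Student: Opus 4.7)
The plan is to follow the template of the proofs of Theorems~\ref{thm:one} and \ref{thm:two}: substitute the preconditioner $H = \epsilon G^{-1}$ into the Lyapunov equation for the stationary covariance, use it to eliminate $\Sigma$ from the KL divergence, and then optimize the resulting scalar expression over $\epsilon$.

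First, I would set up the Ornstein--Uhlenbeck process corresponding to SGD preconditioned by $H = \epsilon G^{-1}$ (a symmetric positive-definite diagonal matrix). Exactly as in the proof of Theorem~\ref{thm:two}, after the substitutions $\epsilon A \leftarrow HA$ and $\epsilon B \leftarrow HB$, the stationary covariance $\Sigma$ satisfies $HA\Sigma + \Sigma A H = \tfrac{1}{S} H BB^\top H$. Multiplying by $H^{-1}$ from the left gives $A\Sigma + H^{-1}\Sigma A H = \tfrac{1}{S} BB^\top H$, and taking the trace together with the cyclic property yields
\begin{align}
{\rm Tr}(A\Sigma) \;=\; \tfrac{1}{2S}\,{\rm Tr}(BB^\top H) \;=\; \tfrac{\epsilon}{2S}\,{\rm Tr}(BB^\top G^{-1}).\nonumber
\end{align}

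Next, I would handle the log-determinant term. Following the $Q = \Sigma H^{-1}$ trick from the proof of Theorem~\ref{thm:two}, one shows that $Q$ satisfies $AQ^\top + QA = \tfrac{1}{S} BB^\top$, which is independent of $\epsilon$. Hence $\log|\Sigma| = \log|H| + \log|Q| = D\log\epsilon - \log|G| + \mathrm{const}$, so up to additive constants the dependence on $\epsilon$ is simply $D\log\epsilon$. Combining with the trace calculation, the KL divergence reduces, up to constants, to
\begin{align}
KL(q\,||\,f) \;\stackrel{\mathrm{c}}{=}\; \tfrac{\epsilon N}{4S}\,{\rm Tr}(BB^\top G^{-1}) \;-\; \tfrac{D}{2}\log\epsilon.\nonumber
\end{align}

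Finally, setting the derivative with respect to $\epsilon$ to zero gives $\tfrac{N}{4S}{\rm Tr}(BB^\top G^{-1}) = \tfrac{D}{2\epsilon}$, which rearranges to Eq.~\ref{eq:version_3}. Essentially no step is a real obstacle since all the heavy lifting (existence and uniqueness of $\Sigma$, the $Q$-reparameterization, cyclicity arguments) was already done in the proofs of Theorems~\ref{thm:one} and~\ref{thm:two}; the only mild subtlety is checking that $H = \epsilon G^{-1}$ is symmetric (true because $G$ is diagonal) so that those manipulations carry over verbatim, and that the $\epsilon$-dependence of $\log|\Sigma|$ cleanly factorizes as $D\log\epsilon$ when the preconditioner has the product form $\epsilon G^{-1}$.
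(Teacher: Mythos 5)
Your proposal is correct and follows essentially the same route as the paper: it reads off the KL divergence from the proof of Theorem~\ref{thm:two} with $H=\epsilon G^{-1}$, uses the Lyapunov equation and the $Q=\Sigma H^{-1}$ reparameterization to isolate the $\epsilon$-dependence as $\tfrac{\epsilon N}{4S}{\rm Tr}(BB^\top G^{-1})-\tfrac{D}{2}\log\epsilon$, and minimizes over $\epsilon$. Your version is in fact spelled out more carefully than the paper's one-line derivation (whose displayed KL expression contains some inconsistent constant factors), and your observation that the argument needs $H$ symmetric, which holds since $G$ is diagonal, is a correct and worthwhile check.
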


\begin{proof}
We read off the appropriate KL divergence from the proof of Theorem~\ref{thm:two} with $G^{-1}\equiv H$:
\begin{equation*}
KL(q  || f )   \stackrel{\mathrm{c}}{=}   \textstyle\frac{\epsilon N}{2S}{\rm Tr}(BB^\top G^{-1}) - {\rm Tr}\log(G) +\textstyle\frac{D}{2} \log \textstyle\frac{\epsilon}{S} - \frac{1}{2}\log |\Sigma|
\end{equation*}
Minimizing this KL divergence over the learning rate $\epsilon$ yields Eq.~\ref{eq:version_3}.
\end{proof}

\
\section{Iterage Averaging}
\label{sec:iterate_proof}
We now prove our result for the covariance of the averaged
iterates.
We first need an identity for the non-equal-time covariance in the stationary state:
\be
{\mathbb E}[\theta(t)\theta(s)^\top] & = & \begin{cases} \Sigma
  e^{-\epsilon A
    (s-t)} & t < s \\ e^{-\epsilon A (t-s)}  \Sigma & t \geq s. \end{cases}
\label{eq:app_autocorr}
\ee
To derive it, one uses the formal solution of the Ornstein-Uhlenbeck process for $\theta(t)$
in combination with Eq.~\ref{eq:estimatorcovariance}, see
also~\citep{gardiner1985handbook} for more details.  Note that for $t=s$,
it simplifies to $\mathbb{E}[\theta(t)\theta(t)^\top] = \Sigma$, as one would expect.

We are averaging over $T$ time steps. Going to the continuous-time OU formalism,
we are interested in the following quantity,
which is the equal-time covariance of the time-averaged iterates:
\begin{equation*}
  \begin{split}
    D &\equiv \E\left[
      \left(
        \frac{1}{T}\int_0^{T}
        \theta(t)dt
      \right)
      \left(
        \frac{1}{T}\int_0^{T}
        \theta(s)ds
      \right)^\top
    \right].
  \end{split}
\end{equation*}
This can be further broken down to two contributions:
\begin{equation*}
  \begin{split}
    D &=
    \frac{1}{T^2}\int_0^T\int_0^T \E[\theta(t)\theta(s)^\top] dsdt
    \\ &=
    \frac{1}{T^2}\int_0^T\int_0^t \E[\theta(t)\theta(s)^\top] dsdt
    + \frac{1}{T^2}\int_0^T\int_t^T \E[\theta(t)\theta(s)^\top] dsdt
\end{split}
\end{equation*}
We now use the eigendecomposition $A=U\Lambda U^\top$ of the Hessian,
the autocorrelation Eq.~\ref{eq:app_autocorr}, 
as well as the identity $e^{cA}
= Ue^{c\Lambda}U^\top$. The first term becomes
\begin{equation}
  \begin{split}
    \frac{1}{T^2}\int_0^T\int_0^t \E[\theta(t)\theta(s)^\top] dsdt
    &= \frac{1}{T^2}\int_0^T \int_0^t  e^{-\epsilon A(t-s)} \Sigma dsdt
    \\ &= \frac{1}{T^2}\int_0^T \int_0^t  U e^{-\epsilon \Lambda(t-s)} U^\top \Sigma dsdt
    \\ &= \frac{1}{\epsilon T^2}\int_0^T U \Lambda^{-1}({\bf I} -
    e^{-\epsilon t \Lambda}) U^\top \Sigma dt
    \\ &= \frac{1}{\epsilon T} A^{-1} \Sigma +
    \frac{1}{\epsilon^2 T^2} U \Lambda^{-2}(e^{-\epsilon T\Lambda} - {\bf I}) U^\top \Sigma.
\end{split}
\end{equation}
The calculation for the second term goes analogously and yields
\begin{equation*}
  \begin{split}
    \frac{1}{T^2}\int_0^T\int_t^T \E[\theta(t)\theta(s)^\top] dsdt
 &= \frac{1}{\epsilon T}  \Sigma A^{-1} +
    \frac{1}{\epsilon^2 T^2} \Sigma U \Lambda^{-2}(e^{-\epsilon T\Lambda} - {\bf I})
    U^\top.
\end{split}
\end{equation*}
Both equations combined give us
\begin{equation}
  \begin{split}
D &=    \frac{1}{\epsilon T} ( A^{-1} \Sigma+\Sigma A^{-1}) \\
    & +  \frac{1}{\epsilon^2 T^2} (U \Lambda^{-2}(e^{-\epsilon
      T\Lambda} - {\bf I}) U^\top \Sigma + \Sigma U \Lambda^{-2}(e^{-\epsilon
      T\Lambda} - {\bf I}) U^\top).
\end{split}
\label{eq:app_D_full}
\end{equation}
When $\epsilon T \Lambda \gg 1$ (valid for sufficiently long averaging
periods $T$), we obtain
$$D \approx
\frac{1}{\epsilon T} ( A^{-1} \Sigma+\Sigma A^{-1}),
$$
which is Eq.~\ref{eq:iterate_autocorr} in the main text.
We can also simplify the expression for $\Sigma = \frac{\epsilon}{2S}I$, as motivated in
Section~\ref{sec:iterate_bayes}. In this case, Eq.~\ref{eq:app_D_full} results in
\begin{equation*}
  \begin{split}
D &=    \frac{1}{S T} A^{-1}  +  \frac{1}{\epsilon S T^2} (U \Lambda^{-2}(e^{-\epsilon
      T\Lambda} - {\bf I}) U^\top).
\end{split}
\end{equation*}
This is exactly Eq.~\ref{eq:stationarycovariance_averaging} in the
main text.

\bibliography{references}

\end{document}